\documentclass[conference]{IEEEtran}
\IEEEoverridecommandlockouts

\usepackage{cite}
\usepackage{amsmath,amssymb,amsfonts}
\usepackage{algorithmic}
\usepackage{graphicx}
\usepackage{textcomp}
\usepackage{xcolor}
\usepackage{multirow}
\usepackage{subfigure}
\usepackage{epsfig}
\usepackage{colortbl}
\usepackage{amsmath}
\usepackage{epstopdf}
\usepackage{balance}
\usepackage[ruled,vlined]{algorithm2e}
\usepackage{hyperref}       
\usepackage{url}            

\usepackage{bbm}
\usepackage{dsfont}

\definecolor{pink}{rgb}{1,0.1,1}
\definecolor{orange}{rgb}{1,0.25,0}

\newtheorem{definition}{Definition}

\newtheorem{example}{Example}
\newtheorem{theorem}{Theorem}

\newtheorem{proposition}{Proposition}
\newenvironment{proof}{\textit{Proof}.}{\hfill$\square$}

\def\BibTeX{{\rm B\kern-.05em{\sc i\kern-.025em b}\kern-.08em
    T\kern-.1667em\lower.7ex\hbox{E}\kern-.125emX}}
\begin{document}

\title{Differentiable and Scalable Generative Adversarial Models for Data Imputation}

\author{\IEEEauthorblockN{1\textsuperscript{st} Yangyang Wu{$^{*1}$} ~ Jun Wang{$^{*2}$} ~ Xiaoye Miao{$^{*3}$} ~ Wenjia Wang{$^{*4}$}~ Jianwei Yin{$^{*\dag6}$}}
\IEEEauthorblockA{\textit{dept. name of organization (of Aff.)} \\
\textit{name of organization (of Aff.)}\\
City, Country \\
email address or ORCID}
\\
\alignauthor
 \affaddr{{$^{*}$}Center for Data Science, Zhejiang University, Hangzhou, China}\\
 \affaddr{{$^{*}$}Information Hub, The Hong Kong University of Science and Technology, Hong Kong, China}\\
 \affaddr{{$^{\dag}$}College of Computer Science, Zhejiang University, Hangzhou, China}\\
\email{zjuwuyy@zju.edu.cn, jwangfx@connect.ust.hk, miaoxy@zju.edu.cn, wenjiawang@ust.hk, zjuyjw@cs.zju.edu.cn
}
}

\author{\IEEEauthorblockN{Yangyang Wu{$^{*1}$} ~ Jun Wang{$^{\ddag2}$}~ Xiaoye Miao{$^{*3}$} ~ Wenjia Wang{$^{\ddag4}$} ~ Jianwei Yin{$^{*\dag5}$}}
\IEEEauthorblockA{{$^{*}$}Center for Data Science, Zhejiang University, Hangzhou, China}
\IEEEauthorblockA{{$^{\ddag}$}Information Hub, The Hong Kong University of Science and Technology, Hong Kong, China}
\IEEEauthorblockA{{$^{\dag}$}College of Computer Science, Zhejiang University, Hangzhou, China}
Email: \{zjuwuyy$^{1}$, miaoxy$^{3}$, zjuyjw$^{5}$\}@zju.edu.cn \qquad jwangfx$^{2}$@connect.ust.hk \qquad wenjiawang$^{4}$@ust.hk
}

\maketitle

\begin{abstract}
Data imputation has been extensively explored to solve the missing data problem.
The dramatically increasing volume of incomplete data makes the imputation models computationally infeasible in many real-life applications.
In this paper, we propose an effective scalable imputation system named \textsf{SCIS} to significantly speed up the training of the differentiable generative adversarial imputation models under accuracy-guarantees for large-scale incomplete data.
\textsf{SCIS} consists of two modules, \textit{differentiable imputation modeling} (DIM) and \textit{sample size estimation} (SSE).
DIM leverages a new \textit{masking Sinkhorn} divergence function to make an arbitrary generative adversarial imputation model differentiable, while for such a differentiable imputation model, SSE can estimate an appropriate sample size to ensure the user-specified imputation accuracy of the final model.
Extensive experiments upon several real-life large-scale datasets demonstrate that, our proposed system can accelerate the generative adversarial model training by 7.1x.
Using around 7.6\% samples, \textsf{SCIS} yields competitive accuracy with the state-of-the-art imputation methods in much shorter computation time.

\end{abstract}

\begin{IEEEkeywords}
Data imputation, generative adversarial network, differentiable model, large-scale incomplete data
\end{IEEEkeywords}

\IEEEpeerreviewmaketitle

\section{Introduction}
\label{sec:Intr}

Many real-life scenarios, such as e-commerce, transportion science, and health care, encounter the problem of missing data~\cite{berti2018discovery,miao2021generative,rekatsinas2017holoclean,soliman2010supporting_35,wei2019embedded} as long as the data collection is involved.
Data might be missing for various reasons, including collection device failure \cite{biessmann2018deep}, instable system environment \cite{wei2019embedded}, privacy concerns \cite{cao2018brits}, etc.
The missing data problem poses a daunting challenge to the downstream data analysis.

To alleviate the missing data problem, a substantial amount of data imputation studies \cite{royston2011multiple,mattei2019miwae}
has been carried out, including the statistical ones~\cite{farhangfar2007novel}, machine learning ones \cite{stekhoven2011missforest}, multi-layer perceptron (MLP) based ones~\cite{boris2020missing}, autoencoder (AE) based ones \cite{ipsen2020not}, and generative adversarial network (GAN) based ones \cite{yoon2018gain,spinelli2019missing}.
Ideally, a preferable data imputation algorithm is to learn the true underlying data distribution well from the observed data, and count on the learned distribution to impute the missing data.
GAN-based imputation methods \cite{yoon2018gain,spinelli2019missing} are attempts in this direction.
They maximize the distributional proximity of generated data and true underlying data by introducing an adversarial game between a \emph{generator} and a \emph{discriminator}.
Many empirical investigations \cite{yoon2018gain,spinelli2019missing,kim2020survey} have demonstrated the promising performance brought by GAN-based methods for data imputation.

The ubiquity of data collection technologies with unprecedented processing power and substantial storage capacity has given rise to a dramatic increase in the volume of incomplete data.
For example, a real-world COVID-19 case surveillance public use dataset \cite{Wahltinez2020} contains 22,507,139 cases with 7 clinical and symptom features, taking a 47.62\% missing rate.
The large volume of incomplete data, however, means that it is expensive and unwieldy to exploit the above imputation algorithms.
Although GAN-based methods achieve better imputation performance than other imputation approaches, they deplete exceedingly high training cost over large-scale incomplete data.
Our experimental results show that, almost all imputation methods take more than $10^5$ seconds on training over the \emph{million}-size incomplete data.
In general, the effective and scalable data imputation over large-scale incomplete data is indispensable in many real-life scenarios.

It is challenging to apply the GAN-based imputation methods to specific real-life scenarios, particularly for large-scale incomplete data.
First, there is a strong theoretical evidence showing that, the Jensen-Shannon (JS) divergence of the GAN-based imputation model fails in the case that the true underlying and generated data distributions have non-overlapping supports\footnote{The support of a distribution is the complement of the largest open set of the random variables having zero-probability.}\cite{arjovsky2017towards,bellemare2017cramer,arjovsky2017wasserstein}.
This makes the JS divergence based imputation loss function non-differentiable, and suffer from the ``vanishing'' gradient problem.
Second, existing GAN-based imputation methods consume high training cost to calculate gradients with the mini-batch strategy for both generator and discriminator over the entire dataset.
In general, the model complexity and training sample size are two primary factors that affect the efficiency of the GAN-based imputation.

Therefore, in this paper, we propose an effective SCalable Imputation System \textsf{SCIS} to optimize GAN-based imputation models.
\textsf{SCIS} makes the GAN-based imputation model differentiable via using the optimal transport theory.
Then, for the differentiable model, it pays attention to the training sample size, and estimates an appropriate sample size for efficient and accuracy-guaranteed training.
The system is composed of two modules, a \emph{differentiable imputation modeling} (DIM) module and a \emph{sample size estimation} (SSE) module.
In terms of the first challenge, the DIM module leverages a masking Sinkhorn (MS) divergence to convert a GAN-based imputation model into a differentiable one, which can always provide reliable gradients to \emph{avoid} the ``vanishing'' gradient problem.
Regarding the second challenge, the SSE module estimates the minimum sample size to enable the trained differentiable GAN-based model to meet a user-specified error tolerance.
Hence, \textsf{SCIS} employs the minimum sample size to make the differentiable GAN-based model \emph{scalable} on large-scale incomplete data through SSE.
In summary, the main contributions of this paper are described as follows.

\begin{itemize}
\item We propose an effective scalable imputation system $\textsf{SCIS}$ with differentiable GAN-based imputation models, which can be effectively and efficiently trained.
\item In the DIM module, we put forward an MS divergence to quantify the closeness between the true underlying and generated data distributions.
    It employs the optimal transport theory to make the GAN-based imputation model differentiable, for avoiding the ``vanishing'' gradient issue.
\item The SSE module leverages the differentiability of the MS divergence to estimate the minimum sample size for training an approximate differentiable GAN-based model, according to a user-specified imputation accuracy.
    It thus significantly saves the model training cost.
\item Extensive experiments using several real-life large-scale incomplete datasets demonstrate the computational benefits of \textsf{SCIS}, compared  with the state-of-the-art methods.
\end{itemize}


The rest of the paper is organized as follows.
We introduce the background in Section~\ref{sec:background}.
Section~\ref{sec:overview} gives an overview of the proposed system \textsf{SCIS}.
We elaborate the DIM and SSE modules in Section~\ref{sec:DIM} and Section~\ref{sec:SSE}, respectively.
Section~\ref{sec:experiment} reports the experimental results and findings.
Finally, we conclude this work in Section~\ref{sec:conclusion}.

\section{Background}
\label{sec:background}

\subsection{Existing Imputation Methods}
\label{subsec:Related-Work}

Existing imputation algorithms can be categorized by their mainly used models, including statistical ones, machine learning ones, and deep learning ones.
The statistical imputation methods substitute the missing values with the statistics~\cite{farhangfar2007novel}, or the most similar ones among the training data~\cite{twala2005comparison,altman1992introduction}.
This kind of method has a limited ability to handle the missing data problem, since they ignore the data distribution analysis.

In contrast, the machine learning imputation approaches are to train parametric models in machine learning \cite{zhang2021parrot,yang2021keyword} to estimate the missing values,
including decision tree models like XGBoost imputation \cite{chen2016xgboost}, MissFI (MissForest imputation) \cite{stekhoven2011missforest}, and Baran \cite{mahdavi2020baran}, and regression models like MICE (multivariate imputation by chained equations) \cite{royston2011multiple} and imputation via individual model~\cite{zhang2019learning}.
These imputation methods employ the batch gradient descent techniques \cite{ruder2016overview} to calculate the model gradient over the entire dataset.
The main issue is that, the incomplete dataset may be too large to fit in memory.

Moreover, the deep learning imputation methods leverage deep learning models \cite{du2021deep,mahajan2021predicting} to impute missing values with the mini-batch gradient descent.
This category consists of i) MLP-based ones like DataWig \cite{biessmann2019datawig} and RRSI (round-robin Sinkhorn imputation)~\cite{boris2020missing}, ii) AE-based ones, e.g., MIDAE (multiple imputation denoising autoencoder) \cite{gondara2017multiple}, VAEI (variational autoencoder imputation) \cite{mccoy2018variational}, EDDI (efficient dynamic discovery of highvalue information framework) \cite{ma2018eddi}, HIVAE (heterogeneous incomplete variational autoencoder) \cite{nazabal2018handling}, MIWAE (missing data importance-weighted autoencoder) \cite{mattei2019miwae}, and not-MIWAE (not-missing-at-random importance-weighted autoencoder) \cite{ipsen2020not}, and iii) GAN-based ones, such as GINN (graph imputation neural network) \cite{spinelli2019missing} and GAIN (generative adversarial imputation network) \cite{yoon2018gain}.
The above methods calculate the model gradients with a series of random  partitions of the dataset, to train the imputation models over large-scale incomplete data.
Nevertheless, both the iteration times and training cost of these methods are dramatically increasing with the rising volume of incomplete data.

As analyzed earlier, the data generated by GAN-based methods tend to be closer to the true underlying data manifold than that of the other ones.
Nevertheless, due to the high training cost and ``vanishing'' gradient problem, the GAN-based imputation methods are faced with a big challenge to deal with large-scale incomplete data imputation.
In this paper, our proposed system \textsf{SCIS} aims to provide good effectiveness and scalability for GAN-based imputation models, so as to make them practical in real-world applications.

\begin{table}[t]
\centering\small
\setlength{\tabcolsep}{2pt}
\caption{Symbols and description}
\vspace*{-0.08in}
\label{tab:symbol}
\begin{tabular}{|p{1.6cm}|p{6.5cm}|}
\hline
\textbf{Symbol}           &  \textbf{Description} \\ \hline
$\mathbf{X}$    &   an input incomplete dataset (stored in a matrix) \\\hline
$\mathbf{M}$     &   the mask matrix w.r.t. $\mathbf{X}$ \\\hline
$\mathcal{M}$ and $\mathcal{M}_\star$     &  an imputation model and the optimized model \\ \hline
$\mathbf{\bar{X}}$ and $\mathbf{\hat{X}}$&  the reconstructed matrix and imputed matrix of $\mathbf{X}$ \\ \hline
$\mathbf{X}_0$ and $\mathbf{M}_0$     &   the initial dataset matrix and its mask matrix\\\hline
$\mathbf{X}_v$ and $\mathbf{M}_v$     &   the validation dataset matrix and its mask matrix  \\\hline
$n_0$ and $N_v$     &   the initial sample size and validation sample size\\\hline
$\varepsilon$ and $\alpha$ &  the user-tolerated error bound and confidence level \\ \hline
\end{tabular}
\vspace*{-0.1in}
\end{table}

\subsection{Problem Definition}

The input incomplete dataset is stored in a matrix $\mathbf{X} = (\mathbf{x}_{1}, \cdots, \mathbf{x}_{N})^\top$, in which each data sample $\mathbf{x}_i = (x_{i1}, \cdots,$ $x_{id})^{}$ with $x_{ij} \in \mathcal{X}_{j}$ takes values from a  $d-$dimensional space $\mathcal{X} = \mathcal{X}_{1} \times \dots \times \mathcal{X}_{d}$.
For encoding its missing information, we define a mask matrix $\mathbf{M} = (\mathbf{m}_1, \cdots, \mathbf{m}_N)^\top$,  where each mask vector $\mathbf{m}_i=(m_{i1}, \cdots,$ $m_{id})$ corresponds to a data sample $\mathbf{x}_i$.
In particular, $m_{ij}$ takes value from $\{0, 1\}$, $i = 1, \cdots, s$, and $j = 1, \cdots, d$; $m_{ij}$ = 1 (resp. 0) \emph{iff} the $j$-th dimension is observed (resp. missing).
Note that, we use $\mathbf{X}$ and $\mathbf{X}_N$ interchangeably throughout this paper.
Table~\ref{tab:symbol} lists the frequently used symbols throughout this paper.

In general, we formalize the \emph{data imputation} problem over $\mathbf{X}$ and $\mathbf{M}$ in Definition \ref{def:mdi}.

\begin{definition}
\label{def:mdi}
{\bf (Data imputation)}.
Given an incomplete dataset $\mathbf{X}$ with its mask matrix $\mathbf{M}$, the data imputation problem aims to build an imputation model $\mathcal{M}$ to find appropriate values for missing values in $\mathbf{X}$, i.e., the \emph{imputed} matrix
\begin{equation}
\begin{aligned}
\mathbf{\hat{X}} = \mathbf{{M}} \odot \mathbf{{X}} + (1 -\mathbf{M}) \odot \mathbf{\bar{X}}
\end{aligned}
\label{eq:imputation}
\end{equation}
where $\odot$ is the element-wise multiplication; $\mathbf{\bar{X}} = \mathcal{M}(\mathbf{{X}})$ is the reconstructed matrix predicted by $\mathcal{M}$ over $\mathbf{{X}}$. In this way, the imputation model $\mathcal{M}$, (i) makes $\mathbf{\hat{X}}$ as \emph{close} to the real \emph{complete} dataset (if it exists) as possible, or (ii) helps downstream prediction tasks to achieve better performance if adopting $\mathbf{\hat{X}}$ than that only with original one $\mathbf{X}$.
\end{definition}

In this paper, our study mission is to empower the GAN-based imputation model with efficiency and effectiveness for large-scale incomplete data, such that for the optimized model, (i) the training cost is minimized, and (ii) the imputation accuracy satisfies a user-tolerated error bound.

In particular, the GAN-based imputation model builds an adversarial training platform for two players, i.e., the generator and discriminator.
The generator is applied to generate data as close to the true underlying data distribution as possible. While the discriminator distinguishes the difference between the generated data and true underlying data as correctly as possible.
The objective function of GAN-based imputation is defined as a minimax problem over the generator and discriminator.
As a result, the GAN-based model employs the optimized generator to impute missing values via using Eq. \ref{eq:imputation}.

%

\section{System Overview}
\label{sec:overview}

In this section, we present an overview of the proposed system \textsf{SCIS}. It consists of a \emph{differentiable imputation modeling} (DIM) module and a \emph{sample size estimation} (SSE) module.

To facilitate the effective and scalable imputation on large-scale incomplete data, our proposed system \textsf{SCIS} first converts a GAN-based imputation model into a differentiable one, to avoid the ``vanishing'' gradient problem.
Then, for such a differentiable GAN-based imputation model, \textsf{SCIS} minimizes the training sample size under accuracy-guarantees to accelerate the imputation.
Its general procedure is described in Algorithm \ref{alg:SCIS}.
At first, \textsf{SCIS} samples a size-$N_v$ validation dataset $\mathbf{X}_v\in \mathbb{R}^{N_v\times d}$ (with the validation mask matrix $\mathbf{M}_v$) from the incomplete input dataset $\mathbf{X}\in \mathbb{R}^{N\times d}$ (with the mask matrix $\mathbf{M}$), while it samples a size-$n_0$ dataset $\mathbf{X}_0\in \mathbb{R}^{n_0\times d}$ (with the initial mask matrix $\mathbf{M}_0$) from the rest (line 1).
Then, \textsf{SCIS} invokes the DIM module to train an initial model $\mathcal{M}_0$ with the masking Sinkhorn (MS) divergence imputation loss function over $\mathbf{X}_0$ and $\mathbf{M}_0$ (line 2).

Next, with the support of the differentiablity of MS divergence, \textsf{SCIS} consults the SSE module to estimate the minimum sample size $n_\star$ (with $n_0\le n_\star\le N$) based on $\mathcal{M}_0$, such that the imputation difference of the trained models over the size-$n_\star$ dataset and the (whole) size-$N$ dataset would not exceed the user-tolerated error bound $\varepsilon$ with probability at least ($1-\alpha$).
In particular, if $n_\star$ is equal to $n_0$, \textsf{SCIS} simply outputs $\mathcal{M}_0$ and the matrix $\hat{\mathbf{X}}$ imputed by $\mathcal{M}_0$.
Otherwise, \textsf{SCIS} constructs a size-$n_\star$ sample set $\mathbf{X}_{\star}$ and its mask matrix $\mathbf{M}_{\star}$ from $\mathbf{X}$ and $\mathbf{M}$.
It invokes the DIM module again to retrain $\mathcal{M}_0$ using $\mathbf{X}_{\star}$ and $\mathbf{M}_{\star}$ (lines 3-5).
Finally, \textsf{SCIS} returns the trained model $\mathcal{M}_\star$ and the matrix $\hat{\mathbf{X}}$ imputed by $\mathcal{M}_\star$.

\begin{algorithm}[t]
\caption{The Procedure of \textsf{SCIS}}
\label{alg:SCIS}
\DontPrintSemicolon
\LinesNumbered
\SetNlSty{large}{}{:}
\KwIn{an incomplete set $\mathbf{X}$ with its mask matrix $\mathbf{M}$, a validation size $N_v$, an initial size $n_0$, a GAN-based model $\mathcal{M}$, a user-tolerated error bound $\varepsilon$, and a confidence level $\alpha$}
\KwOut{the trained model $\mathcal{M}_\star$ and imputed data $\hat{\mathbf{X}}$}
sample a size-$N_v$ validation dataset $\mathbf{X}_v$ (with $\mathbf{M}_v$) and a size-$n_0$ initial dataset $\mathbf{X}_0$ (with $\mathbf{M}_0$)\;
invoke DIM to train an initial model $\mathcal{M}_0$ with the MS divergence loss function over $\mathbf{X}_0$ and $\mathbf{M}_0$\;
consult SSE to derive the minimum size $n_\star$ to satisfy the error bound $\varepsilon$ with probability at least ($1-\alpha$)\;
\lIf{$n_\star = n_0$}{$\mathcal{M}_\star\gets \mathcal{M}_0$}
\lElse{invoke DIM to train $\mathcal{M}_0$ on the minimum sample set $\mathbf{X}_{\star}$ and $\mathbf{M}_{\star}$ to obtain the optimized $\mathcal{M}_\star$}
reconstruct $\mathbf{X}$ via using $\mathcal{M}_\star$ to obtain $\bar{\mathbf{X}}$\;
$\hat{\mathbf{X}} = \mathbf{M} \odot \mathbf{X} + (1 - \mathbf{M}) \odot \bar{\mathbf{X}}$\;
\Return $\mathcal{M}_\star$ and $\hat{\mathbf{X}}$
\end{algorithm}

\section{Differentiable Imputation Modeling}
\label{sec:DIM}

For the GAN-based imputation model, the intersection in the supports of the true underlying and generated data distributions is usually negligible \cite{arjovsky2017towards}.
In such case, the JS divergence makes the GAN-based imputation models non-differentiable, and even suffer from the ``vanishing'' gradient problem.
This problem may prevent the model parameter from updating its value or even stop model from further training.

In the \textsf{SCIS}, the \emph{differentiable imputation modeling} (DIM) module designs a masking Sinkhorn (MS) divergence by deploying the mask matrix from the imputation task and optimal transport theory, to make the GAN-based imputation models \emph{differentiable}, and thus obtain reliable gradients through the model training and circumvent ``vanishing'' gradient problem.
In this section, we first introduce the MS divergence imputation loss function.
Then, we elaborate how to optimize the GAN-based imputation model via the MS divergence.

\subsection{Masking Sinkhorn Divergence}
\label{subsec:Sinkhorn}

The differentiable masking Sinkhorn (MS) divergence is devised by introducing the mask matrix, entropic regularizer, and corrective terms into the optimal transport metric.
Specifically, in the MS divergence, we first empower the optimal transport metric with the mask matrix, to devise the masking optimal transport metric for data imputation.
Then, we put forward the entropic regularizer to make the masking regularized optimal transport metric differentiable and programmable.
We also equip the MS divergence with corrective terms to correct the bias from the entropic regularizer.

To be more specific, the masking optimal transport metric is introduced by deploying the mask matrix and optimal transport metric, as stated in Definition \ref{def:mot}.
It is based on a simple intuition that the observed and generated data should share the same distribution.
Let $\hat{\mu}_{\mathbf{x}} \overset{def}{=} \frac{1}{n}\sum_{i=1}^{n} \delta_{\mathbf{x}_{i}}$, $\hat{\mu}_{\mathbf{m}} \overset{def}{=} \frac{1}{n}\sum_{i=1}^{n} \delta_{\mathbf{m}_{i}}$, and  $\hat{\nu}_{\bar{\mathbf{x}}} \overset{def}{=} \frac{1}{n}\sum_{i=1}^{n} \delta_{\bar{\mathbf{x}}_{i}}$ denote the empirical measures over a size-$n$ data matrix $\mathbf{X}_n\subset\mathbf{X}$, its mask matrix $\mathbf{M}_n$, and the reconstructed matrix $\bar{\mathbf{X}}_n$ respectively, where
$\delta_{\mathbf{x}_i}$, $\delta_{\mathbf{m}_i}$, and $\delta_{\bar{\mathbf{x}}_{i}}$ are the Dirac distributions.

\begin{definition}
\label{def:mot}
{\bf (The masking optimal transport)}.
The masking optimal transport metric $OT^{\mathbf{m}} $ $ (\hat{\nu}_{\bar{\mathbf{x}}},$ $\hat{\mu}_{\mathbf{x}})$ over $\hat{\nu}_{\bar{\mathbf{x}}}$ and $\hat{\mu}_{\mathbf{x}}$ is defined as the optimal transport metric $OT(\hat{\nu}_{\bar{\mathbf{x}}} \otimes \hat{\mu}_{\mathbf{m}},  \hat{\mu}_{\mathbf{x}} \otimes \hat{\mu}_{\mathbf{m}})$ over $\hat{\nu}_{\bar{\mathbf{x}}} \otimes \hat{\mu}_{\mathbf{m}}$ and $\hat{\mu}_{\mathbf{x}} \otimes \hat{\mu}_{\mathbf{m}}$, i.e.,
\begin{equation}\nonumber
\begin{aligned}
      OT^{\mathbf{m}} (\hat{\nu}_{\bar{\mathbf{x}}}, \hat{\mu}_{\mathbf{x}}) &= OT(\hat{\nu}_{\bar{\mathbf{x}}} \otimes \hat{\mu}_{\mathbf{m}},  \hat{\mu}_{\mathbf{x}} \otimes \hat{\mu}_{\mathbf{m}}) = \min_{\mathbf{P} \in \Gamma_{n, n}} \langle \mathbf{P} , \mathbf{C}_\mathbf{m}\rangle,
\label{eq:mot}
\end{aligned}
\end{equation}
where $\hat{\nu}_{\bar{\mathbf{x}}} \otimes \hat{\mu}_{\mathbf{m}}$ (resp. $\hat{\mu}_{\mathbf{x}} \otimes \hat{\mu}_{\mathbf{m}}$) stands for the product measure of $\hat{\nu}_{\bar{\mathbf{x}}}$ (resp. $\hat{\mu}_{\mathbf{x}}$) and $\hat{\mu}_{\mathbf{m}}$.
The transportation plan matrix $\mathbf{P}$ is from the set $\Gamma_{n, n} \overset{def}{=} \{\mathbf{P} \in \mathbb{R}^{n\times n}: \mathbf{P}\mathds{1}_{n} = \frac{1}{n}\mathds{1}_{n}, \mathbf{P}^{\top}\mathds{1}_{n} = \frac{1}{n}\mathds{1}_{n}\}$.
The masking cost matrix $\mathbf{C}_\mathbf{m}$ is defined as
\begin{equation} \nonumber
\begin{aligned}
\mathbf{C}_\mathbf{m} = \big(f_c(\mathbf{m}_{i} \odot \bar{\mathbf{x}}_{i}, \mathbf{m}_{j} \odot \mathbf{x}_{j})\big)_{ij} \in \mathbb{R}^{n\times n},
\end{aligned}
\end{equation}
where $\odot$ is the element-wise multiplication; $f_c(\mathbf{x}, \mathbf{y}) = \vert|\mathbf{x} - \mathbf{y}\vert|^2_2$ is the cost function;
$\langle \mathbf{P}, \mathbf{C}_\mathbf{m}\rangle = {\rm tr}(\mathbf{P}^{\top}\mathbf{C}_\mathbf{m})$ is the Frobenius dot-product of $\mathbf{P}$ and $\mathbf{C}_\mathbf{m}$.
\end{definition}

Nervertheless, the masking optimal transport metric is still not differentiable \cite{peyre2019computational}.
Moreover, there exists a costly linear program for computing the transport plan $\mathbf{P}$ in the masking optimal transport metric.
As a result, we further introduce a masking regularized optimal transport metric, as stated in Definition \ref{def:mrot}. This metric becomes differentiable and programmable through an entropic regularization term \cite{genevay2016stochastic}.

\begin{definition}
\label{def:mrot}
{\bf (The masking regularized optimal transport)}.
The masking regularized optimal transport metric $OT^{\mathbf{m}}_{\lambda} $ $ (\hat{\nu}_{\bar{\mathbf{x}}},$ $\hat{\mu}_{\mathbf{x}})$ over $\hat{\nu}_{\bar{\mathbf{x}}}$ and $\hat{\mu}_{\mathbf{x}}$ is defined as
\begin{equation}
\begin{aligned}
      OT^{\mathbf{m}}_{\lambda} (\hat{\nu}_{\bar{\mathbf{x}}}, \hat{\mu}_{\mathbf{x}})
      &= OT^{\mathbf{m}} (\hat{\nu}_{\bar{\mathbf{x}}}, \hat{\mu}_{\mathbf{x}})  + \lambda h(\mathbf{P}) \\
      &=  \min_{\mathbf{P} \in \Gamma_{n, n}} \langle \mathbf{P} , \mathbf{C}_\mathbf{m}\rangle + \lambda \sum_{i=1}^{n}\sum_{j=1}^{n}p_{ij}\log{p_{ij}}
\label{eq:mot}
\end{aligned}
\end{equation}
where $\lambda$ is a hyper-parameter.
\end{definition}

Due to the entropic regularizer $h(\mathbf{P})$ in Eq.~\ref{eq:mot}, the resolution of the optimal transport plan $\mathbf{P}^{\star}$ can be tractable by using Sinkhorn's algorithm~\cite{sinkhorn1964relationship}, and thus the masking regularized optimal transport metric becomes algorithmically differentiable and easily programmable.
This entropic regularizer makes the masking regularized optimal transport metric no longer positive.
This may make the imputation models focus too much on learning the mean of the true underlying data distribution (i.e., ignoring the whole distribution).
To get rid of this issue, we add corrective terms into the MS divergence to assure itself of positivity.

\begin{definition}
\label{def:msd}
{\bf (The masking Sinkhorn divergence)}.
The masking Sinkhorn (MS) divergence $\mathcal{S}_{\mathbf{m}}(\hat{\nu}_{\bar{\mathbf{x}}} ||  \hat{\mu}_{\mathbf{x}})$ between the empirical measures $\hat{\nu}_{\bar{\mathbf{x}}}$ and $\hat{\mu}_{\mathbf{x}}$ is defined as
\begin{equation}\nonumber
\begin{aligned}
    \label{imploss}
      \mathcal{S}_{\mathbf{m}}(\hat{\nu}_{\bar{\mathbf{x}}} ||  \hat{\mu}_{\mathbf{x}}) =& 
      2OT^{\mathbf{m}}_{\lambda} (\hat{\nu}_{\bar{\mathbf{x}}}, \hat{\mu}_{\mathbf{x}}) \\&- [OT^{\mathbf{m}}_{\lambda} (\hat{\nu}_{\bar{\mathbf{x}}}, \hat{\nu}_{\bar{\mathbf{x}}}) + OT^{\mathbf{m}}_{\lambda} (\hat{\mu}_{\mathbf{x}}, \hat{\mu}_{\mathbf{x}})].
\end{aligned}
\end{equation}
\end{definition}

Additionally, we instantiate how the MS divergence handles the ``vanishing'' gradient problem in the following example. Hereby, we also show the non-differentiable property of the JS divergence and the differentiable property of the MS divergence.
Throughout this paper, we make the assumption that the data are missing
completely at random (i.e., MCAR).

\begin{example}
For the imputation task defined in Definition \ref{def:mdi}, we consider $d = 1$ and $\mathcal{X}_{1} = \mathbb{R}$.
Under MCAR mechanism, the mask vector $\mathbf{m} \in \{0, 1\}$ is independent of the sample $\mathbf{x} \in \mathcal{X}_{1}$, i.e., $p_{m}(\mathbf{m}|\mathbf{x}) = p_{m}(\mathbf{m})$.
Thus, a joint distribution of $\mathbf{x}$ and $\mathbf{m}$ can be defined as $p(\mathbf{x}, \mathbf{m}) = p_{x}(\mathbf{x})p_{m}(\mathbf{m})$.
In particular, the true underlying and generated data distributions are defined as ${p}^{0}_{x}(\mathbf{x}) = \delta_{0}$ and ${p}^{\theta}_{x}(\mathbf{x}) = \delta_{\theta}$ with the parameter $\theta \in \mathbb{R}$, respectively.
Besides, the missingness distribution $p_{m}(\mathbf{m})$ is supposed to follow a \textit{Bernoulli} distribution $Ber(q)$ with a constant $ q \in (0, 1)$.
For simplicity, we denote $p_{0}$ and $p_{\theta}$ as the distributions of $p^{0}_{x}(\mathbf{x})p_{m}(\mathbf{m})$ and $p^{\theta}_{x}(\mathbf{x})p_{m}(\mathbf{m})$, respectively.
Thus, the JS divergence between ${p}_{0}$ and ${p}_{\theta}$ is calculated by
\begin{equation*}
    \begin{aligned} \nonumber
JS&({p}_{0} || {p}_{\theta} ) = \int  {p}^{0}_{x}(\mathbf{x})q \log \frac{2{p}^{0}_{x}(\mathbf{x})q}{{p}^{0}_{x}(\mathbf{x})q+{p}^{\theta}_{x}(\mathbf{x})q} d \mathbf{x} \\ \nonumber
& + \int  {p}^{0}_{x}(\mathbf{x})(1 - q) \log \frac{2{p}^{0}_{x}(\mathbf{x})(1 - q)}{{p}^{0}_{x}(\mathbf{x})(1 - q) + {p}^{\theta}_{x}(\mathbf{x})(1 - q)} d \mathbf{x} \\ \nonumber
&+ \int  {p}^{\theta}_{x}(\mathbf{x})q \log \frac{2{p}^{\theta}_{x}(\mathbf{x})q}{{p}^{0}_{x}(\mathbf{x})q+{p}^{\theta}_{x}(\mathbf{x})q} d \mathbf{x}\\
&+ \int  {p}^{\theta}_{x}(\mathbf{x})(1 - q) \log \frac{2{p}^{\theta}_{x}(\mathbf{x})(1 - q)}{{p}^{0}_{x}(\mathbf{x})(1 - q) + {p}^{\theta}_{x}(\mathbf{x})(1 - q)} d \mathbf{x}\\
 &= \begin{cases}
0  & \mbox{if }\theta = 0 \\ \nonumber
2\log 2 & \mbox{else}.
\end{cases}
\end{aligned}
\end{equation*}
We can find that, $JS({p}_{0} || {p}_{\theta})$ is not continuous at $\theta = 0$, and thus non-differentiable.
Moreover, the gradients of $JS({p}_{0} || {p}_{\theta} )$ are 0 almost everywhere.
Thus, $JS({p}_{0} || {p}_{\theta})$ provides useless gradient information to update the model parameter $\theta$, which underlies the ``vanishing'' gradient problem.

In contrast, the differentiable MS divergence between ${p}_{0}$ and ${p}_{\theta}$ is calculated by
\begin{equation*}
\begin{aligned}
\mathcal{S}_{\mathbf{m}}(p_{0}, p_{\theta})
=&  2OT^{\mathbf{m}}_{\lambda} (p_{0}, p_{\theta}) \\& - [OT^{\mathbf{m}}_{\lambda} (p_{0}, p_{0}) + OT^{\mathbf{m}}_{\lambda} (p_{\theta}, p_{\theta})],
\end{aligned}
\end{equation*}
in particular, $OT^{\mathbf{m}}_{\lambda} (p_{0}, p_{0}) = 0$.
$OT^{\mathbf{m}}_{\lambda} (p_{0}, p_{\theta})$ is calculated by
\begin{equation*}
\begin{aligned}
OT^{\mathbf{m}}_{\lambda} (p_{\theta}, p_{\theta}) = \lambda[(1 - q)\log(1 - q) + q\log q]
\end{aligned}
\end{equation*}
Moreover, $OT^{\mathbf{m}}_{\lambda} (p_{0}, p_{\theta})$ is defined as
\begin{equation*}
\begin{aligned}
OT^{\mathbf{m}}_{\lambda} (p_{0}, p_{\theta}) \stackrel{\text { def }}{=} & \min_{\gamma \in \Pi(p_{0}, p_{\theta})} \mathbb{E}_{(\mathbf{y}, \mathbf{y}')\sim \gamma}[(\mathbf{y} - \mathbf{y}')^2 \\&+ \lambda \log \gamma(\mathbf{y},\mathbf{y}')]\\
= & \quad q\theta^{2} + \lambda[(1 - q)\log(1 - q) + q\log q],
\end{aligned}
\end{equation*}
where $\Pi(p_{0}, p_{\theta})$ denotes the set of all joint distributions $\gamma(\mathbf{y}, \mathbf{y}')$, whose marginals are $p_{0}$ and $p_{\theta}$, respectively. The second equality exploits the fact that the optimal joint distribution $\gamma^{*}(\mathbf{y}, \mathbf{y}')$ is calculated by
\begin{equation*}
    \gamma^{*}(\mathbf{y}, \mathbf{y}') = \begin{cases}
1 - q  & \mbox{if } \mathbf{y} = 0 \mbox{ and } \mathbf{y}' = 0  \\ \nonumber
q  & \mbox{if } \mathbf{y} = 0  \mbox{ and } \mathbf{y}' = \theta  \\ \nonumber
0 & \mbox{else}.
\end{cases}
\end{equation*}
Thus, $\mathcal{S}_{\mathbf{m}}(p_{0}, p_{\theta}) = 2q\theta^{2} + \lambda[(1 - q)\log(1 - q) + q\log q]$.
It is obvious that, $\mathcal{S}_{\mathbf{m}}(p_{0}, p_{\theta})$ is differentiable everywhere with respect to $\theta$.
The gradients of $\mathcal{S}_{\mathbf{m}}(p_{0}, p_{\theta})$ vary linearly, which can always provide reliable gradient information to update $\theta$, and thus dispose of the ``vanishing'' gradient problem.
\end{example}

In general, the differentiable MS divergence imputation loss function can be defined as
\begin{equation}\nonumber
\begin{aligned}
    \label{eq: SDILoss}
      \mathcal{L}_s (\mathbf{X}, \mathbf{M}) 
      =\frac{1}{2n}\mathcal{S}_{\mathbf{m}}(\hat{\nu}_{\bar{\mathbf{x}}} ||  \hat{\mu}_{\mathbf{x}}).
\end{aligned}
\end{equation}
By virtue of the \emph{differentiable} MS divergence, the MS divergence imputation loss function can provide a usable and reliable gradient during the training of the GAN-based model, and thus it helps to get rid of the ``vanishing'' gradient issue.

It is worthwhile to note that, the proposed MS divergence from \textsf{SCIS} is to minimize the MS divergence between the true underlying data distribution in the observed data and generated data distribution for GAN-based imputation models.
Hence, with the support of the MS divergence, the true underlying data distribution can be well preserved in the imputed data from observed values.
It is different from the Sinkhorn divergence used in the round-robin Sinkhorn imputation algorithm (RRSI) \cite{boris2020missing}, where the Sinkhorn divergence between any two imputed batches is minimized for the MLP-based imputation model.
In essence, the data distribution predicted by RRSI easily converges to a mixed distribution of the observed data and initial imputed data, rather than the true underlying one, especially when there exist a large amount of missing data.
More intuitively, at the initial step, RRSI fills the missing data by the mean value of the observed data for each incomplete feature separately.
Thus, RRSI is likely to regard two samples with similar missing patterns as similar ones, while these two samples are actually different in the ground true space.

%
%

\subsection{Imputation Optimization with MS Divergence}
\label{subsec:gradient}

In DIM module, we convert a GAN-based imputation model into a differentiable one, by taking the MS divergence to measure the closeness between the true underlying data distribution in $\mathbf{X}_n$ and the generated data distribution in $\bar{\mathbf{X}}_n$.
In pursuit of deriving the GAN-based imputation model to predict $\bar{\mathbf{X}}_n$ as similar to the observed data in $\mathbf{X}_n$ as possible, our goal becomes to find the optimized parameter $\hat{\theta}$ minimizing $\mathcal{S}_{\mathbf{m}}(\hat{\nu}_{\bar{\mathbf{x}}} ||  \hat{\mu}_{\mathbf{x}})$ from a parameter space $\Theta$. Formally, we rewrite $\mathcal{S}_{\mathbf{m}}(\hat{\nu}_{\bar{\mathbf{x}}} ||  \hat{\mu}_{\mathbf{x}})$
as $\mathcal{S}_{\mathbf{m}}(\bar{\mathbf{X}}_n \odot \mathbf{M}_n, {\mathbf{X}}_n\odot \mathbf{M}_n)$. Therefore, the MS divergence imputation loss minimizer is given by
\begin{equation}
\label{eq:obj1}
\begin{aligned}
\hat{\theta}
= \mathop{\arg\min}_{\theta \in \Theta} \frac{1}{2n}\mathcal{S}_{\mathbf{m}}(\bar{\mathbf{X}}_n \odot \mathbf{M}_n, {\mathbf{X}}_n\odot \mathbf{M}_n).
\end{aligned}
\end{equation}

By using the chain rule and the barycentric transport map \cite{cuturi2014fast}, the MS divergence gradient function can be derived by the following proposition.

\begin{proposition}
The MS divergence gradient function $g(\theta)$ can be calculated by
\begin{equation}
\begin{aligned} \nonumber
    g(\theta)&=\frac{1}{2n}\nabla_{\theta}\mathcal{S}_{\mathbf{m}}(\bar{\mathbf{X}}_{n}\odot\mathbf{M}_{n}, \mathbf{X}_{n} \odot \mathbf{M}_{n})\\
  &=\frac{1}{2n}\nabla_{\bar{\mathbf{X}}_{n}}\mathcal{S}_{\mathbf{m}}(\bar{\mathbf{X}}_{n}\odot\mathbf{M}_{n}, \mathbf{X}_{n} \odot \mathbf{M}_{n})\nabla_{\theta}\bar{\mathbf{X}}_{n}\\
    &=\frac{1}{n}\sum_{j=1}^{n} \big[\sum_{i=1}^{n}\mathbf{P}^{\star}_{ij} (\bar{\mathbf{\mathbf{x}}}_{i} \odot \mathbf{m}_{i} - {\mathbf{x}}_{j}\odot\mathbf{m}_{j})  \mathcal{T}(\mathbf{m}_{i})  \nabla_{\theta} \bar{\mathbf{x}}_{i} \big], 
\label{eq:paramchange}
\end{aligned}
\end{equation}
where $\nabla_{\theta}\mathcal{S}_{\mathbf{m}}(\cdot)$ and $\nabla_{\bar{\mathbf{X}}_n}\mathcal{S}_{\mathbf{m}}(\cdot)$ are the derivatives of $\mathcal{S}_{\mathbf{m}}(\cdot)$ with respect to the parameter $\theta$ and the reconstructed matrix $\bar{\mathbf{X}}_n$, respectively;
$\nabla_{\theta}\bar{\mathbf{X}}_n$ is the derivative of $\bar{\mathbf{X}}_n$ with respect to $\theta$;
 $\mathbf{P}^{\star} = (\mathbf{P}^{\star}_{ij})_{ij}$ is the optimal transport plan;
$\mathcal{T}(\mathbf{m}_{i})$ is to transform a mask vector $\mathbf{m}_{i}$ to a diagonal matrix.
\end{proposition}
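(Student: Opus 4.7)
The plan is to start from the three-term expansion of $\mathcal{S}_{\mathbf{m}}$ in Definition \ref{def:msd} and apply the multivariate chain rule once: since $\bar{\mathbf{X}}_n$ is the only $\theta$-dependent quantity entering $\mathcal{S}_{\mathbf{m}}$, we have $\nabla_\theta \mathcal{S}_{\mathbf{m}} = \nabla_{\bar{\mathbf{X}}_n}\mathcal{S}_{\mathbf{m}}\,\nabla_\theta \bar{\mathbf{X}}_n$, which already matches the second line of the claim. The substantive step is then to derive a closed form for $\nabla_{\bar{\mathbf{X}}_n} OT^{\mathbf{m}}_\lambda(\hat{\nu}_{\bar{\mathbf{x}}},\hat{\mu}_{\mathbf{x}})$.

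For this I would invoke Danskin's (envelope) theorem. Because $OT^{\mathbf{m}}_\lambda$ is defined as $\min_{\mathbf{P}\in\Gamma_{n,n}}\langle \mathbf{P},\mathbf{C}_\mathbf{m}\rangle + \lambda h(\mathbf{P})$ with the unique optimizer $\mathbf{P}^\star$ (uniqueness follows from strict convexity of the entropic regularizer), and because the feasible set $\Gamma_{n,n}$ and the entropy $h(\mathbf{P})$ do not depend on $\bar{\mathbf{X}}_n$, the gradient of the minimum equals the partial gradient of the objective evaluated at $\mathbf{P}^\star$, namely $\nabla_{\bar{\mathbf{X}}_n}\langle \mathbf{P}^\star,\mathbf{C}_\mathbf{m}\rangle$. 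A direct computation of the cost function then gives
\begin{equation*}
\nabla_{\bar{\mathbf{x}}_i}\,\|\mathbf{m}_i\odot\bar{\mathbf{x}}_i - \mathbf{m}_j\odot\mathbf{x}_j\|_2^2 \;=\; 2\,(\bar{\mathbf{x}}_i\odot\mathbf{m}_i - \mathbf{x}_j\odot\mathbf{m}_j)\odot\mathbf{m}_i.
\end{equation*}
Rewriting the right-multiplication by $\mathbf{m}_i$ as matrix multiplication by the diagonal matrix $\mathcal{T}(\mathbf{m}_i)$ produces the $\mathcal{T}(\mathbf{m}_i)$ factor in the target formula, and the overall $2$ cancels one factor of $2$ in the prefactor $\tfrac{1}{2n}$. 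Summing $\mathbf{P}^\star_{ij}$ over $j$ (outer) and $i$ (inner), and then composing with $\nabla_\theta\bar{\mathbf{x}}_i$, reproduces the claimed expression exactly.

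The main obstacle I foresee concerns the two self-correction terms $OT^{\mathbf{m}}_\lambda(\hat{\mu}_{\mathbf{x}},\hat{\mu}_{\mathbf{x}})$ and $OT^{\mathbf{m}}_\lambda(\hat{\nu}_{\bar{\mathbf{x}}},\hat{\nu}_{\bar{\mathbf{x}}})$. The first is free of $\bar{\mathbf{X}}_n$ (and hence of $\theta$) and can simply be discarded. The second, however, depends on $\bar{\mathbf{X}}_n$ through both positions in the cost matrix, and strictly speaking its gradient contributes a symmetric pair $\mathbf{P}^{\star,\text{self}}_{ij}(\bar{\mathbf{x}}_i\odot\mathbf{m}_i - \bar{\mathbf{x}}_j\odot\mathbf{m}_j)\mathcal{T}(\mathbf{m}_i)$. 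I would argue, using the symmetry $\mathbf{P}^{\star,\text{self}}_{ij} = \mathbf{P}^{\star,\text{self}}_{ji}$ of the self-coupling and swapping dummy indices, that these symmetric contributions cancel in pairs, so that only the cross-term $2\,OT^{\mathbf{m}}_\lambda(\hat{\nu}_{\bar{\mathbf{x}}},\hat{\mu}_{\mathbf{x}})$ contributes to $\nabla_\theta\mathcal{S}_{\mathbf{m}}$. This cancellation, together with careful bookkeeping of the masking diagonal $\mathcal{T}(\mathbf{m}_i)$ inside the element-wise product, is where I expect the bulk of the technical verification to lie; once established, the rest of the derivation is a routine application of the chain rule and Danskin's theorem.
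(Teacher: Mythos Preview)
Your overall route coincides with the paper's: it too applies the chain rule and then differentiates $OT^{\mathbf{m}}_\lambda$ at the optimizer via the barycentric transport map, which is exactly the Danskin/envelope argument you spell out. The paper's derivation is in fact briefer than yours: it writes down $\nabla_{\bar{\mathbf{x}}_i}OT^{\mathbf{m}}_\lambda(\bar{\mathbf{X}}_n\odot\mathbf{M}_n,\mathbf{X}_n\odot\mathbf{M}_n)$ directly, composes with $\nabla_\theta\bar{\mathbf{x}}_i$, and stops---it never mentions the two self-correction terms at all.

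Your instinct that the self-term $OT^{\mathbf{m}}_\lambda(\hat{\nu}_{\bar{\mathbf{x}}},\hat{\nu}_{\bar{\mathbf{x}}})$ is the delicate point is correct, but the pairwise-cancellation argument you sketch does not close. Differentiating the self-cost in $\theta$ gives
\[
\sum_{i,j}\mathbf{P}^{\star,\mathrm{self}}_{ij}\,2(\bar{\mathbf{x}}_i\odot\mathbf{m}_i-\bar{\mathbf{x}}_j\odot\mathbf{m}_j)^\top\bigl[\mathcal{T}(\mathbf{m}_i)\nabla_\theta\bar{\mathbf{x}}_i-\mathcal{T}(\mathbf{m}_j)\nabla_\theta\bar{\mathbf{x}}_j\bigr]\;=\;S_1-S_2,
\]
and the index swap together with $\mathbf{P}^{\star,\mathrm{self}}_{ij}=\mathbf{P}^{\star,\mathrm{self}}_{ji}$ yields $S_2=-S_1$, so the contribution is $2S_1$, not zero. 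In other words, once you contract with $\mathcal{T}(\mathbf{m}_i)\nabla_\theta\bar{\mathbf{x}}_i$ the two halves reinforce rather than cancel. The paper simply sidesteps this by tacitly identifying $\nabla_{\bar{\mathbf{X}}_n}\mathcal{S}_{\mathbf{m}}$ with $2\nabla_{\bar{\mathbf{X}}_n}OT^{\mathbf{m}}_\lambda(\hat{\nu}_{\bar{\mathbf{x}}},\hat{\mu}_{\mathbf{x}})$; you are right to flag it, but a symmetry argument alone will not dispose of it.
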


\begin{proof}
First, by using the barycentric transport map, we obtain that $\forall i \in {1,\dots,n}$,
\begin{align*}
    \nabla{\bar{\mathbf{x}}_{i}}OT^{\mathbf{m}}_{\lambda}&(\bar{\mathbf{X}}_{n} \odot \mathbf{M}_{n}, {\mathbf{X}}_{n} \odot \mathbf{M}_{n})  \\
    &= \bigg[\sum_{j=1}^{n}\mathbf{P}^{\star}_{ij} (\bar{\mathbf{x}}_{i} \odot \mathbf{m}_{i}- \mathbf{x}_{j} \odot \mathbf{m}_{j})\bigg]  {\mathcal{T}}(\mathbf{m}_i). 
\end{align*}
Then, by using the chain rule, the gradient of $\mathcal{L}_{s}(\mathbf{X}_{n}, \mathbf{M}_{n})$ can be calculated by
\begin{align*}
    g(\theta)=&
    \left[\frac{1}{2n}\nabla_{\bar{\mathbf{X}}_{n}}\mathcal{S}_{m}( \bar{\mathbf{X}}_{n} \odot \mathbf{M}_{n},{\mathbf{X}}_{n} \odot \mathbf{M}_{n})\right]\nabla_{\theta}\bar{\mathbf{X}}\\
    =&\frac{1}{n}\sum_{i=1}^{n}\bigg[\sum_{j=1}^{n}\mathbf{P}^{\star}_{ij}(\bar{\mathbf{x}}_{i} \odot \mathbf{m}_{i}  - {\mathbf{x}}_{j} \odot \mathbf{m}_{j})\bigg] \mathcal{T}(\mathbf{m}_i)  \nabla_{\theta}\bar{\mathbf{x}}_{i}\\
    =&\frac{1}{n}\sum_{j=1}^{n}\bigg[\sum_{i=1}^{n}\mathbf{P}^{\star}_{ij}(\bar{\mathbf{x}}_{i} \odot  \mathbf{m}_{i} - {\mathbf{x}}_{j} \odot \mathbf{m}_{j}) {\mathcal{T}}(\mathbf{m}_i)  \nabla_{\theta}\bar{\mathbf{x}}_{i}\bigg].
\end{align*}
\end{proof}

\begin{figure}[t]
\center
  \includegraphics[width=1.05\linewidth]{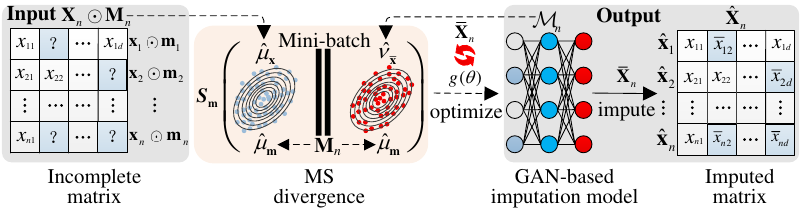}
  \vspace*{-0.3in}
  \caption{The architecture of the DIM module}
 \label{fig:DIM}
 \vspace*{-0.2in}
\end{figure}

Consequently, as depicted in Figure~\ref{fig:DIM}, the DIM module takes the data matrix $\mathbf{X}_n$ and its mask matrix $\mathbf{M}_n$ as inputs, and outputs the optimized GAN-based imputation model $\mathcal{M}_{n}$ and the data matrix $\hat{\mathbf{X}}_n$ imputed by $\mathcal{M}_{n}$.
Using the MS divergence gradient function, DIM employs a mini-batch gradient descent technique \cite{yu2017seqgan} to solve the optimization problem in Eq. \ref{eq:obj1} for the differentiable GAN-based imputation model.
To be more specific, similar as the studies \cite{bellemare2017cramer, salimans2018improving}, the discriminator is trained to maximize the MS divergence between the true underlying and generated data distributions, while the generator is trained by minimizing the MS divergence metric evaluated by the newly updated discriminator.
As a result, the DIM module employs the optimized GAN-based imputation model $\mathcal{M}_{n}$ to impute missing values in $\mathbf{X}_n$ via using Eq. \ref{eq:imputation}.


\section{Sample Size Estimation}
\label{sec:SSE}


With the support of the differentiable GAN-based imputation model in the DIM module, the goal of the \emph{sample size estimation} (SSE) module is to analytically infer the minimum sample size $n_\star$ (with $n_0\le n_\star\le N$) under accuracy guarantees.
With the SSE module, the imputation difference $\mathcal{D}(\mathcal{M}_{\star}, \mathcal{M}_{N})$ between the model $\mathcal{M}_{\star}$ trained on the size-$n_\star$ (in)complete samples from $\mathbf{X}$ and the model $\mathcal{M}_{N}$ trained on the given incomplete dataset $\mathbf{X}$ is well-controlled, i.e.,
\begin{equation}
\begin{array}{ll}
&  \mathbb{P}(\mathcal{D}(\mathcal{M}_{\star}, \mathcal{M}_{N})\le \varepsilon) \geq 1-\alpha,\\
\text{where} & \mathcal{D}(\mathcal{M}_{\star}, \mathcal{M}_{N})=\left(\mathbb{E}_{\mathbf{x}}\left[\mathbf{m}\odot(\bar{\mathbf{x}}_{\star}-\bar{\mathbf{x}}_{N})\right]^{2}\right)^{\frac{1}{2}}, %
\label{eq:target}
\end{array}
\end{equation}
where $\varepsilon$ is a user-tolerated error bound; $\alpha$ is a confidence level; $\mathbf{m}$ is the mask vector of $\mathbf{x}$; $\bar{\mathbf{x}}_{\star}$ and $\bar{\mathbf{x}}_{N}$ are the vectors reconstructed by $\mathcal{M}_{\star}$ and $\mathcal{M}_{N}$ over $\mathbf{x}$, respectively.

The core idea of SSE is that, the differentiablity of MS divergence enables \textsf{SCIS} to
determine the distributions of the parameters $\theta_\star$ and $\theta_N$ for GAN-based imputation models (i.e., $\mathcal{M}_{\star}$ and $\mathcal{M}_{N}$),
based on which we exploit an empirical estimation to calculate the probability $\mathbb{P}(\mathcal{D}(\theta_\star, \theta_N)\le \varepsilon)$, i.e., $\mathbb{P}(\mathcal{D}(\mathcal{M}_{\star}, \mathcal{M}_{N})\le \varepsilon)$.
Specifically, in the SSE module, we first estimate the distributions of the parameters $\theta_\star$ and $\theta_N$ with the given parameter $\theta_{0}$ of the initial model $\mathcal{M}_{0}$ by using the minimum distance estimator on differentiable MS divergence, as shown in Theorem \ref{theorem:1}.
Then, we employ an empirical estimation to calculate the probability $\mathbb{P}(\mathcal{D}(\theta_\star, \theta_N)\le \varepsilon)$, as presented in Proposition \ref{prop:2}.
Finally, with the estimation of $\mathbb{P}(\mathcal{D}(\theta_\star, \theta_N)\le \varepsilon)$, we use binary search to find the minimum size $n_\star$ (in)complete samples to satisfy the users' expectation on imputation accuracy in Eq. \ref{eq:target}.


To begin with, we study the estimation on the distribution of the parameter $\theta_{n}$ in an unknown differentiable GAN-based imputation model $\mathcal{M}_n$ with a size-$n$  ($n_0\le n\le N$) (in)complete sample set from $\mathbf{X}$, by using the parameter $\theta_{0}$ of the initial model $\mathcal{M}_{0}$.
Inspired by the asymptotic normality of minimum distance estimator \cite{newey1994large}, we estimate the parameter distribution of $\theta_{n}$ in $\mathcal{M}_n$ with the given $\theta_{0}$ of $\mathcal{M}_{0}$, as stated in Theorem \ref{theorem:1}.
For brevity, we make a notation that for two positive-value functions $g$ and $h$ over the same domain and upper bounded by a positive constant, $g \asymp h$ iff $g/h$ and $h/g$ are bounded by a constant.
In particular, the MS divergence gradient function makes Theorem \ref{theorem:1} work for GAN-based imputation models, while the corresponding theorem in \cite{park2019blinkml} is only targeted for traditional machine learning models.

\begin{theorem}\label{theorem:1}
Given the parameter $\theta_0$ of a differentiable generative adversarial imputation model $\mathcal{M}_0$, the conditional probability distribution of $\hat{\theta}_n$ with respect to $\mathcal{M}_n$ satisfies
\begin{equation}
\begin{array}{ll}
\label{eq:conditional}
&\hat{\theta}_n|\theta_{0} \to \mathcal{N}(\theta_{0}, \eta \mathbf{H}^{-1}) \\
\text { with } & \eta \asymp e^{\frac{6}{\lambda}}(1 + \frac{1}{\lambda^{\lfloor d/2 \rfloor}})^{2} \cdot \left(\frac{1}{n_0} - \frac{1}{n}\right),
\end{array}
\end{equation}
as $n_0 \to \infty$ and $n \to \infty$,
where $\lambda$ is a hyper-parameter in the MS divergence;
$\mathbf{H}$ is the invertible Hessian matrix of the MS divergence imputation loss function with the given parameter $\theta_{0}$; $\mathcal{N}(\theta_{0}, \eta \mathbf{H}^{-1})$ denotes a multivariate normal distribution with mean $\theta_{0}$ and covariance matrix $\eta \mathbf{H}^{-1}$.
\end{theorem}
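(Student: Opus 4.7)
The plan is to treat $\hat{\theta}_n$ as an M-estimator (equivalently a minimum-distance estimator on the MS divergence) and invoke the standard asymptotic-normality machinery, then refine the variance constant using known sample-complexity bounds for Sinkhorn divergences. Since both $\theta_0$ and $\hat{\theta}_n$ solve first-order conditions $g(\theta)=0$ of the same population functional (the population MS divergence loss) but on samples of sizes $n_0$ and $n$ respectively, I would first Taylor-expand the empirical gradient around the population minimizer $\theta^{\ast}$ and apply the delta-method style argument: both $\hat{\theta}_{n_0}-\theta^{\ast}$ and $\hat{\theta}_n-\theta^{\ast}$ admit linear representations $-\mathbf{H}^{-1}\nabla_{\theta}\mathcal{L}_s$ evaluated on the respective samples, plus lower-order remainders. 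The conditional distribution of $\hat{\theta}_n$ given $\theta_0=\hat{\theta}_{n_0}$ is then controlled by the sample-difference term in that linearization.

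Concretely, I would proceed in four steps. First, establish that the population loss $\theta\mapsto \mathbb{E}\mathcal{L}_s$ is twice continuously differentiable at its minimizer with invertible Hessian $\mathbf{H}$; the differentiability claim follows from the smoothness of the MS divergence established in the DIM section (entropic regularization makes Sinkhorn everywhere $C^{\infty}$ in its inputs), while invertibility is the standard local-identifiability assumption. Second, apply the classical M-estimator expansion (e.g., Newey--McFadden Theorem 3.1) to obtain $\hat{\theta}_n-\theta^{\ast} = -\mathbf{H}^{-1} g_n(\theta^{\ast}) + o_p(n^{-1/2})$, where $g_n$ is the empirical MS-divergence gradient derived in Section~\ref{subsec:gradient}. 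Third, condition on $\theta_0=\hat{\theta}_{n_0}$ and write
\begin{equation*}
\hat{\theta}_n - \theta_0 = -\mathbf{H}^{-1}\bigl(g_n(\theta^{\ast}) - g_{n_0}(\theta^{\ast})\bigr) + o_p\!\left(\sqrt{\tfrac{1}{n_0}-\tfrac{1}{n}}\right).
\end{equation*}
Because the size-$n$ sample contains the size-$n_0$ sample as a sub-sample (or, equivalently, because disjoint increments are independent), the variance of $g_n(\theta^{\ast}) - g_{n_0}(\theta^{\ast})$ scales as $(1/n_0 - 1/n)$ times the per-sample score variance $\Sigma$; a central limit theorem then yields conditional asymptotic normality with mean $\theta_0$ and covariance $(1/n_0-1/n)\,\mathbf{H}^{-1}\Sigma\mathbf{H}^{-1}$.

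Fourth, I would upper-bound $\Sigma$ using the sample-complexity bound for Sinkhorn divergences due to Genevay et al., which states that the empirical Sinkhorn divergence concentrates around its population value at rate $e^{\kappa/\lambda}(1+\lambda^{-\lfloor d/2\rfloor})n^{-1/2}$; squaring and tracking the constant $\kappa=3$ through the cost-matrix bounds (note that $\mathbf{C}_{\mathbf{m}}$ is bounded above by the squared diameter of the masked data) produces the prefactor $e^{6/\lambda}(1+\lambda^{-\lfloor d/2\rfloor})^{2}$. Absorbing this into the variance and writing $\eta \asymp e^{6/\lambda}(1+\lambda^{-\lfloor d/2\rfloor})^{2}\bigl(\tfrac{1}{n_0}-\tfrac{1}{n}\bigr)$ gives the stated covariance, up to the factor $\mathbf{H}^{-1}$ which appears explicitly.

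The main obstacle I anticipate is the fourth step: controlling the score variance $\Sigma$ of the MS-divergence gradient with the explicit $\lambda$-dependence. The raw CLT only gives asymptotic normality with some unspecified covariance; getting the $e^{6/\lambda}(1+\lambda^{-\lfloor d/2\rfloor})^{2}$ factor requires carefully lifting the Sinkhorn sample-complexity bounds from the divergence itself to its gradient (via envelope arguments on the dual Sinkhorn potentials), and verifying that the bounded-cost and Lipschitz-in-$\theta$ assumptions hold for $\mathbf{C}_{\mathbf{m}}$ under the masking operation. The other routine conditions (measurability, a dominating envelope for the score class, invertibility of $\mathbf{H}$ along the optimization path) should follow from the smoothness of the Sinkhorn map and the standard assumption that $\theta_0$ is already close to the population minimizer.
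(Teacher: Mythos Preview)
Your proposal follows essentially the same route as the paper's proof: both treat $\hat{\theta}_n$ as a minimum-distance/M-estimator, linearize the gradient $g(\theta)$ around the population minimizer via the mean-value theorem, invoke a CLT on the score sum, exploit the nested-sample structure to get the $(1/n_0-1/n)$ scaling, and import the $e^{\kappa/\lambda}(1+\lambda^{-\lfloor d/2\rfloor})$ sample-complexity constant from Genevay et al.\ for the Sinkhorn divergence (with $\kappa=3$ after normalizing the data to $[0,1]^d$ and using the quadratic cost).

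There is, however, one concrete step you are missing. Your Step~3 ends with the sandwich covariance $(1/n_0-1/n)\,\mathbf{H}^{-1}\Sigma\,\mathbf{H}^{-1}$, and in Step~4 you only bound the scalar magnitude of $\Sigma$. That does not yield the stated form $\eta\,\mathbf{H}^{-1}$: a scalar bound on $\Sigma$ would leave you with $\mathbf{H}^{-2}$, not $\mathbf{H}^{-1}$. The paper closes this gap by invoking the information-matrix equality $\mathbf{J}=-\mathbf{H}$ (where $\mathbf{J}$ is the score covariance and $\mathbf{H}$ the Hessian of the MS loss at $\theta_0$), which collapses $\mathbf{H}^{-1}\mathbf{J}\mathbf{H}^{-1}$ to $\mathbf{H}^{-1}$. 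You should name and justify this identity for the MS-divergence loss explicitly; without it the matrix structure of the asymptotic covariance does not match the theorem.

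A secondary difference in route: you pass to the conditional law $\hat{\theta}_n\mid\theta_0$ directly from the nested-sample decomposition (disjoint increments independent), whereas the paper first shows $\hat{\theta}_0-\hat{\theta}_n$ and $\hat{\theta}_n-\theta_\infty$ are uncorrelated (hence independent in the Gaussian limit), derives the law of $\hat{\theta}_0\mid\theta_n$, and then applies Bayes' theorem with a flat prior on $\theta_n$ to flip the conditioning. Your argument is the more direct of the two and avoids the flat-prior assumption, but you should still state explicitly why the distribution of the difference $\hat{\theta}_n-\theta_0$ coincides with the conditional distribution given $\theta_0$; the independence of the increment from $\hat{\theta}_{n_0}$ is exactly what is needed there.
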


\begin{proof}
To estimate the conditional probability distribution of $\hat{\theta}_n$, we first derive the distribution of $\hat{\theta}_{0} - \theta_{\infty}$ by the multivariate central limit theorem, where $\theta_{\infty}$ is the conceptual optimal parameter when the sample size approaches infinity.
Then, we infer the distribution of $\hat{\theta}_{0} - \hat{\theta}_{n}$ based on that of $\hat{\theta}_{0} - \theta_{\infty}$.
Finally, we exploit Bayes' theorem to estimate the distribution of $\hat{\theta}_{n}|\theta_{0}$ from the distribution of $\hat{\theta}_{0} - \hat{\theta}_{n}$.

Specifically, since the MS divergence is differentiable, ${\theta}_{0}$ is obtained by finding the parameter at which $g(\theta)$ becomes 0, i.e., ${\theta}_{0}$ satisfies $g(\theta_{0}) = 0$.
According to the mean-value theorem, there exists $\bar{\theta}_{0}$ between $\theta_{0}$ and $\theta_{\infty}$ that satisfies
\begin{equation} \nonumber
\begin{aligned}
    g'(\bar{\theta}_{0})\cdot(\theta_{0} - \theta_{\infty}) &= g(\theta_{0}) - g(\theta_{\infty}) = -g(\theta_{\infty}).
\end{aligned}
\end{equation}
Besides, we note that $\theta_{0}$ is simply an instance from the distribution of $\hat{\theta}_{0}$.
Therefore, we can find
\begin{align} \nonumber
    \hat{\theta}_{0} - \theta_{\infty} &= - g'(\bar{\theta}_{0})^{-1}g(\theta_{\infty}) \\ \nonumber
     &= -  g'(\bar{\theta}_{0})^{-1} \bigg(\sum_{j=1}^{n_0}\bigg[\sum_{i=1}^{n_0}\mathbf{P}^{\star}_{ij} (\bar{\mathbf{\mathbf{x}}}_{i} \odot \mathbf{m}_{i} \\\label{eq:second}
     & \qquad - {\mathbf{x}}_{j} \odot \mathbf{m}_{j})  \mathcal{T}(\mathbf{m_{i}})  \nabla_{\theta_{\infty}} \bar{\mathbf{x}}_{i} \bigg]\bigg)\\ \label{eq:clt}
    & \xrightarrow{n_0 \to \infty} \mathcal{N}\bigg(0, \frac{1}{n_{0}}\zeta(\lambda )\mathbf{H}^{-1}\mathbf{J}\mathbf{H}^{-1}\bigg) \\
     &=  \mathcal{N}\bigg(0, \frac{1}{n_{0}}\zeta(\lambda)\mathbf{H}^{-1}\bigg),\nonumber
\end{align}
where $\zeta(\lambda) \asymp {e^{\frac{2\kappa}{\lambda}}}(1+\frac{1}{\lambda^{\lfloor d/2 \rfloor}})^{2}$; $\mathbf{J}$ and $\mathbf{H}$ are the information matrix and the invertible Hessian matrix of the MS divergence imputation loss function with the given parameter $\theta_{0}$, respectively.
The transition from Eq. \ref{eq:second} to Eq. \ref{eq:clt} is based on the multidimensional central limit theorem and the fact that the simple complexity for Sinkhorn divergence is $O(\frac{e^{\frac{\kappa}{\lambda}}}{\sqrt{n}}(1+\frac{1}{\lambda^{\lfloor d/2 \rfloor}}))$ \cite{genevay2019sample}, where $\kappa = 2L|\mathcal{X}|+ \vert|f_c\vert|_{\infty}$. Here, $|\mathcal{X}|$ is the diameter $\rm{sup}\{\lVert \mathbf{x} - \mathbf{x'}\rVert \mid \mathbf{x}, \mathbf{x'} \in \mathcal{X}\}$
of $\mathcal{X}$. $L$ is the Lipschitz constant for the cost function $f_c$.
In our case, both $|\mathcal{X}|$ and $L$ will be 1, since the input data will be normalized to $[0,1]^{d}$ and the cost function $f_c$ is a squared two-norm function. Thus, $\zeta(\lambda) \asymp {e^{\frac{6}{\lambda}}}(1+\frac{1}{\lambda^{\lfloor d/2 \rfloor}})^{2}$.
Moreover, the information matrix equality $\mathbf{J} = - \mathbf{H}$ implies the last equation.

Then, with the estimated distribution of $\hat{\theta}_{0} - \theta_{\infty}$, we further infer the distribution of $\hat{\theta}_{0} - \hat{\theta}_{n}$.
To be more specific, since $\mathbf{X}_{n}$ can be regarded as a union of $\mathbf{X}_{0}$ and $\mathbf{X}_{n} - \mathbf{X}_{0}$, we introduce two random variables $V_1$ and $V_2$ independently following $\mathcal{N}(0,  \zeta(\lambda)\mathbf{H}^{-1})$ to capture the randomness of $\mathbf{X}_{0}$ and $\mathbf{X}_{n} - \mathbf{X}_{0}$, respectively.
Note that $\hat{\theta}_{0} - \theta_{\infty}$ $\rightarrow$ $\frac{1}{\sqrt{n_0}}V_{1}$.
In this way, we can find
\begin{align}\nonumber
    \hat{\theta}_{0} - \hat{\theta}_{n} =& (\hat{\theta}_{0} - \theta_{\infty}) - (\hat{\theta}_{n} - \theta_{\infty})\\\nonumber
     =& -\frac{1}{\sqrt{n_0}}\mathbf{H}^{-1}\Bigg(\frac{1}{\sqrt{n_0}} \sum_{j=1}^{n_0} \bigg[\sum_{i=1}^{n_0}\mathbf{P}^{\star}_{ij}(\bar{\mathbf{\mathbf{x}}}_{i} \odot \mathbf{m}_{i} \\\nonumber
    &- {\mathbf{x}}_{j} \odot \mathbf{m}_{j})  \mathcal{T}(\mathbf{m}_{i})  \nabla_{\theta_{\infty}} \bar{\mathbf{x}}_{i} \bigg]\Bigg)\\ \nonumber
    & + \frac{1}{\sqrt{n}}\mathbf{H}^{-1}\Bigg(\frac{\sqrt{n_0}}{\sqrt{n}}\frac{1}{\sqrt{n_0}}\sum_{j=1}^{n_0} \Bigg[\sum_{i=1}^{n}\mathbf{P}^{\star}_{ij}(\bar{\mathbf{\mathbf{x}}}_{i} \odot \mathbf{m}_{i} \\\nonumber
    &- {\mathbf{x}}_{j} \odot \mathbf{m}_{j})  \mathcal{T}(\mathbf{m}_{i})  \nabla_{\theta_{\infty}} \bar{\mathbf{x}}_{i} \Bigg]\\ \nonumber
    & + \frac{\sqrt{n-n_0}}{\sqrt{n}} \frac{1}{\sqrt{n - n_0}} \sum_{j=n_0+1}^{n} \Bigg[\sum_{i=1}^{n}\mathbf{P}^{\star}_{ij}(\bar{\mathbf{\mathbf{x}}}_{i} \odot \mathbf{m}_{i} \\\nonumber
    &- {\mathbf{x}}_{j} \odot \mathbf{m}_{j})  \mathcal{T}(\mathbf{m}_{i})  \nabla_{\theta_{\infty}} \bar{\mathbf{x}}_{i} \Bigg]\Bigg).
\end{align}
Moreover, we exploit the fact that $\hat{\theta}_{0} - \hat{\theta}_{n}$, which is the sum of two independent normal random variables, asymptotically follows a normal distribution. Thus, when both $n_0$ and $n$ approach infinity, $\hat{\theta}_{0} - \hat{\theta}_{n}$ follows
\begin{equation}\nonumber
\begin{aligned}
    \hat{\theta}_{0} - \hat{\theta}_{n} &\xrightarrow{n_0 \text{ and }  n \to \infty} \left(\frac{1}{\sqrt{n_0}} - \frac{\sqrt{n_0}}{n}\right)V_{1} - \frac{\sqrt{n - n_0}}{n}V_{2}\\
     &\sim \mathcal{N}\left(0, \left(\frac{1}{n_0} - \frac{1}{n}\right) \cdot \zeta(\lambda) \cdot \mathbf{H}^{-1}\right). \label{eq:comb1}
\end{aligned}
\end{equation}

Finally, we exploit Bayes' theorem to estimate the distribution of $\hat{\theta}_{n}|\theta_{0}$ based on the distribution of $\hat{\theta}_{0} - \hat{\theta}_{n}$.
Observe that $\hat{\theta}_{0} - \hat{\theta}_{n}$ and $\hat{\theta}_{n} - \theta_{\infty}$ are independent because the covariance between them is zero, i.e.,
\begin{equation}
    \begin{aligned}\nonumber
    Cov(\hat{\theta}_{0} - \hat{\theta}_{n}&, \hat{\theta}_{n} - \theta_{\infty}) = \frac{1}{2}\Big(Var(\hat{\theta}_{0} - \hat{\theta}_{n} + \hat{\theta}_{n} - \theta_{\infty}) \\
    &- Var(\hat{\theta}_{0} - \hat{\theta}_{n}) - Var(\hat{\theta}_{n} - \theta_{\infty})\Big)\\
    =& \frac{1}{2} \left(\frac{1}{n_0} - \left(\frac{1}{n_0} - \frac{1}{n}\right) - \frac{1}{n}\right)\cdot \zeta(\lambda) \cdot\mathbf{H}^{-1} \\=& 0.
    \end{aligned}
\end{equation}
Thus,
\begin{equation}\nonumber
Var(\hat{\theta}_{0} - {\theta}_{n}) =  Var(\hat{\theta}_{0} - \hat{\theta}_{n}|\theta_{n}) = \eta \mathbf{H}^{-1},
\end{equation}
which implies
\begin{equation}\nonumber
\begin{aligned}
\label{eq:condistri}
    &\hat{\theta}_{0} \sim (\theta_{n}, \eta \mathbf{H}^{-1}),\\
    \text {with}\ & \eta \asymp e^{\frac{6}{\lambda}}(1 + \frac{1}{\lambda^{\lfloor d/2 \rfloor}})^{2} \cdot \left(\frac{1}{n_0} - \frac{1}{n}\right).
    \end{aligned}
\end{equation}
By using Bayes' theorem,
\begin{equation} \nonumber
    \mathbb{P}(\theta_{n}|\theta_{0}) = \frac{1}{Z}\mathbb{P}(\theta_{0}|\theta_{n})\mathbb{P}(\theta_{n}),
\end{equation}
for some normalization constant $Z$. Since we don't have any extra information about the prior probability $\mathbb{P}(\theta_{n})$, we set a constant to $\mathbb{P}(\theta_{n})$.
Therefore, we can find that, the conditional probability distribution of $\hat{\theta}_n$ with respect to $\mathcal{M}_n$ satisfies
\begin{equation} \nonumber
\begin{aligned}
    &\hat{\theta}_n|\theta_{0} \to \mathcal{N}(\theta_{0}, \eta \mathbf{H}^{-1}), \\ \text{ with } & \eta \asymp e^{\frac{6}{\lambda}}\left(1 + \frac{1}{\lambda^{\lfloor d/2 \rfloor}}\right)^{2} \cdot \left(\frac{1}{n_0} - \frac{1}{n}\right).
    \end{aligned}
\end{equation}
\end{proof}

For the differentiable GAN-based imputation model $\mathcal{M}_0$, the invertible Hessian matrix $\mathbf{H}$ in Eq. \ref{eq:conditional} can be computed by
\begin{equation}\nonumber
\begin{aligned}
\mathbf{H} =& \frac{1}{n_0}\sum_{j=1}^{n_0}\sum_{i=1}^{n_0}\mathbf{P}^{\star}_{ij}  \Bigg((\sum_{k=1}^{d}{m}^{2}_{ik}(m_{ik} \bar{x}_{ik} - {m}_{jk} {x}_{jk}) \nabla^2_{\theta} \bar{\mathbf{x}}_{ik})\\
&+ [\mathcal{T}(\mathbf{m}_{i})  \nabla_{\theta} \bar{\mathbf{x}}_{i} ]^{\top}  \mathcal{T}(\mathbf{m}_{i})  \nabla_{\theta} \bar{\mathbf{x}}_{i} \Bigg) \\
\approx&  \frac{1}{n_0}\sum_{j=1}^{n_0}\sum_{i=1}^{n_0}\mathbf{P}^{\star}_{ij}  [\mathcal{T}(\mathbf{m}_{i})  \nabla_{\theta} \bar{\mathbf{x}}_{i} ]^{\top} \mathcal{T}(\mathbf{m}_{i})  \nabla_{\theta} \bar{\mathbf{x}}_{i}.
\end{aligned}
\label{eq:hessian_initial}
\end{equation}
In particular, $\nabla^2_{\theta} \bar{\mathbf{x}}_{ik}$ indicates the Hessian matrix of $\bar{\mathbf{x}}_{ik}$ with respect to $\theta$.
The approximation follows by ignoring the first term in the first equation. The first term is negligible, since the reconstructed vector $\bar{\mathbf{x}}_i$ happens to be very close to the observed values in $\mathbf{x}_{i} \in \mathbf{X}_0$, where
the initial model $\mathcal{M}_0$ has been trained on $\mathbf{X}_0$ \cite{nilsen2019efficient}.

As a result,  with the given parameter $\theta_0$ and the estimated probability distribution of the parameter $\theta_n$, we employ an empirical estimation to determine $\mathbb{P}(\mathcal{D}(\theta_{0}, \theta_{n})\le \varepsilon)$.
It is inspired by the conservative estimation developed on Hoeffding's inequatity in \cite{park2019blinkml}.
We can guarantee the probability $\mathbb{P}(\mathcal{D}(\theta_{0}, \theta_{n})\le \varepsilon)$ at least  ($1 - \alpha$) by requesting the empirical estimation of $\mathbb{P}(\mathcal{D}(\theta_{0}, \theta_{n})\le \varepsilon)$  no less than a certain value, as described in Proposition \ref{prop:2}.
In essence, we exploit interval estimation to derive a new lower bound for the empirical statistic of $\mathbb{P}(\mathcal{D}(\theta_{0}, \theta_{n})\le \varepsilon)$ in Proposition \ref{prop:2}.
The lower bound is orthogonal to the one in \cite{park2019blinkml} by considering an additional hyper-parameter $\beta$ to bound the error between $\mathbb{P}(\mathcal{D}(\theta_{0}, \theta_{n})\le \varepsilon)$ and its empirical statistic.
\begin{proposition}\label{prop:2}
\label{lemma:probability}
Let $f(\theta_{n})$ be the density function of the conditional probability distribution  $\hat{\theta}_n|\theta_{0}$.  Suppose $\theta_{n, 1}, \cdots, \theta_{n, k}$ are i.i.d. samples drawn from $f(\theta_{n})$, where $k$ is the number of parameter sampling. The inequality $\mathbb{P}(\mathcal{D}(\theta_{0}, \theta_{n})\le \varepsilon) \geq 1-\alpha$ holds, if $\varepsilon$ satisfies that,
\begin{equation}\nonumber
\begin{aligned}
\mathbb{P}(\mathcal{D}(\theta_{0}, \theta_{n})\le \varepsilon)
&\approx \frac{1}{k}\sum_{i=1}^{k}\mathcal{I}[\mathcal{D}(\theta_{0}, \theta_{n, i}) \le \varepsilon] \\&\ge \frac{1 - \alpha}{1 - \beta} + \sqrt{\frac{\rm{log} (\beta)}{-2k}},
\label{eq:lemma1-app}
\end{aligned}
\end{equation}
where $\beta$ is a hyper-parameter ($0 <  \beta  \le \alpha\le 1$) and $\mathcal{I}$ is the indicator function that returns 1 if its argument is true, otherwise returns 0.
\end{proposition}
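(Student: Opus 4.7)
My plan is to invoke Hoeffding's inequality on the Bernoulli indicators induced by the $k$ parameter draws, then convert that tail bound into a one-sided confidence lower bound on the true probability $p := \mathbb{P}(\mathcal{D}(\theta_0, \theta_n) \le \varepsilon)$. First I would identify the natural Bernoulli structure: since $\theta_{n,1}, \ldots, \theta_{n,k}$ are i.i.d.\ from $f(\theta_n)$, the indicators $Y_i := \mathcal{I}[\mathcal{D}(\theta_0, \theta_{n,i}) \le \varepsilon]$ are i.i.d.\ Bernoulli random variables with common mean $p$, and the empirical average $\hat{p} := \frac{1}{k}\sum_{i=1}^k Y_i$ is exactly the unbiased estimator appearing on the right-hand side of the hypothesis.

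Next I would apply the one-sided Hoeffding inequality for bounded variables $Y_i \in [0,1]$: for any $t > 0$,
\begin{equation*}
\mathbb{P}(\hat{p} - p \ge t) \le \exp(-2k t^2).
\end{equation*}
Setting $t = \sqrt{-\log(\beta)/(2k)}$ makes the right-hand side exactly $\beta$, so with probability at least $1-\beta$ over the sampling of $\theta_{n,i}$ the one-sided confidence bound $p \ge \hat{p} - t$ is in force. This $t$ is precisely the slack encoded by the $\sqrt{\log(\beta)/(-2k)}$ term in the proposition's threshold.

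Then I would plug in the standing hypothesis. Under $\hat{p} \ge \frac{1-\alpha}{1-\beta} + t$, chaining with the Hoeffding bound yields $p \ge \hat{p} - t \ge \frac{1-\alpha}{1-\beta}$. Finally, the ordering assumption $0 < \beta \le \alpha \le 1$ gives the elementary inequality $\frac{1-\alpha}{1-\beta} \ge 1-\alpha$ (equivalent to $\beta \ge 0$), so $p \ge 1-\alpha$, which is exactly the conclusion.

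The main obstacle I expect is interpretive rather than computational: $p$ is deterministic given $\theta_0$, while $\hat{p}$ is random over the parameter draws, so the ``$\approx$'' in the statement has to be read as the Hoeffding-guaranteed link between the two, and ``holds'' as ``holds on the high-probability Hoeffding event.'' The seemingly stricter target $\frac{1-\alpha}{1-\beta}$ (instead of the naive $1-\alpha$) is exactly what the elementary manipulation $\frac{1-\alpha}{1-\beta} \ge 1-\alpha$ absorbs, converting the Hoeffding confidence $1-\beta$ into the target $(1-\alpha)$-guarantee; tracking this bookkeeping cleanly is the step most prone to error.
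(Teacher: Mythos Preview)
Your approach is essentially the same as the paper's: both apply one-sided Hoeffding to the Bernoulli indicators $\mathcal{I}[\mathcal{D}(\theta_0,\theta_{n,i})\le\varepsilon]$, choose the slack $t=\sqrt{-\log(\beta)/(2k)}$ so that the tail probability equals $\beta$, and then chain with the stated hypothesis on $\hat p$. The only cosmetic difference is the final bookkeeping: you pass from $p\ge\frac{1-\alpha}{1-\beta}$ to $p\ge 1-\alpha$ via the elementary inequality $\frac{1-\alpha}{1-\beta}\ge 1-\alpha$, whereas the paper phrases the same conclusion multiplicatively as $(1-\beta)\cdot\frac{1-\alpha}{1-\beta}=1-\alpha$.
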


\begin{proof}
By applying Hoeffding's inequality,
\begin{align}\nonumber
&\mathbb{P}(a  \ge b -   \varepsilon_{1}) \ge  1 -  e^{-2k\varepsilon_{1}^{2}},\\ \nonumber
\rm{where} \quad & a = \int \mathcal{I}[\mathcal{D}(\theta_{0}, \theta_{n})\le \varepsilon]f(\theta_{n})d\theta_{n},\\
&b = \frac{1}{k}\sum_{i=1}^{k} \mathcal{I}[\mathcal{D}(\theta_{0}, \theta_{n, i}) \le \varepsilon]. \nonumber
\end{align}
Hence, for a given $k$, we can select $\varepsilon_{1} \ge \displaystyle{\sqrt{\frac{\rm{log}(\beta)}{-2k}}}$, so as to have ($1 - \beta$) probability of assuring $a$ not less than $b -  \varepsilon_{1}$.

Furthermore, to guarantee that $\mathbb{P}(\mathcal{D}(\theta_{0}, \theta_{n}) \le \varepsilon) \ge 1 - \alpha$, we will take conservative estimation, i.e., ensuring that
\begin{align}\nonumber
   \frac{1}{k}\sum_{i=1}^{k}\mathcal{I}[\mathcal{D}(\theta_{0}, \theta_{n,i}) \le \varepsilon] &\ge \frac{1 - \alpha}{1 - \beta} + \varepsilon_{1}\\\nonumber
   &\ge \frac{1 - \alpha}{1 - \beta} + \sqrt{\frac{\rm{log} (\beta)}{-2k}}.
\end{align}
Thus, we can finally have at least $(1 - \beta)\cdot \displaystyle{\frac{1 - \alpha}{1 - \beta}} = (1 - \alpha)$ confidence on guaranteeing $\mathcal{D}(\theta_{0}, \theta_{n}) \le \varepsilon$.
\end{proof}

Therefore, according to Proposition \ref{lemma:probability}, the probability $\mathbb{P}(\mathcal{D}(\theta_{n}, \theta_N) \le \varepsilon)$ can be approximated by
\begin{equation}\nonumber
\begin{aligned}
\mathbb{P}(\mathcal{D}(\theta_{n}, \theta_N) \le \varepsilon) \approx \frac{1}{k} \sum_{i=1}^{k} \mathcal{I}[\mathcal{D}(\theta_{{n}, i}, \theta_{N, i})\le \varepsilon],
\label{eq:joint}
\end{aligned}
\end{equation}
where $(\theta_{{n}, i}, \theta_{N, i})$ is a parameter pair sampled from the conditional probability distributions $\hat{\theta}_{n, i}|\theta_{0}$ and $\hat{\theta}_{N}|\theta_{n, i}$, respectively.
Moreover, $\mathbb{P}(\mathcal{D}(\theta_{n}, \theta_N) \le \varepsilon)$ is an increasing function with respect to $n$, which is derived in the similar way as \cite{park2019blinkml}.

Based on Proposition \ref{prop:2}, the SSE module uses binary search to find the minimum size $n_\star$ (with $n_0\le n_\star\le N$) of (in)complete samples in $\mathbf{X}$ to satisfy the users' expectation on imputation accuracy. The search procedure terminates, if $n_\star$ satisfies Eq. \ref{eq:target} while ($n_\star-1$) does not.
Namely,
\begin{equation}\nonumber
\begin{aligned}
\frac{1}{k} \sum_{i=1}^{k} \mathcal{I}[\mathcal{D}(\theta_{{\star}, i}, \theta_{N, i})\le \varepsilon]&\ge \frac{1 - \alpha}{1 - \beta} + \sqrt{\frac{\rm{log} (\beta)}{-2k}} \\
&> \frac{1}{k} \sum_{i=1}^{k} \mathcal{I}[\mathcal{D}(\theta_{{\star-1}, i}, \theta_{N, i})\le \varepsilon],
\label{eq:terminate}
\end{aligned}
\end{equation}
where $\theta_{\star-1}$ is the parameter of the GAN-based imputation model trained with the size-($n_\star-1$) (in)complete samples.


\section{Experiment}
\label{sec:experiment}
In this section, we evaluate the performance of our proposed system \textsf{SCIS} via comparisons with eleven state-of-the-art imputation methods.
All algorithms were implemented in Python.
The experiments were conducted on an Intel Core 2.80GHz server with TITAN Xp 12GiB (GPU) and 192GB RAM, running Ubuntu 18.04 system.

\textbf{Datasets.}
In the experiments, we use six public real-world incomplete COVID-19 datasets over several scenarios.
Table \ref{tab:dataset} lists the information of each dataset, including the number of samples ($\#$Samples), features ($\#$Features), and missing rate, respectively.
In particular, (i) COVID-19 trials tracker\footnote{\scriptsize{https://www.kaggle.com/frtgnn/covid19-trials-tracker?select=Covid-19+Trials Tracker.csv}}  (\emph{\textbf{Trial}}) dataset shows the clinical trial registries on studies of COVID-19 all around the world and tracks the availability of the studies' results. It contains 6,433 trials with 9 features, taking an about 9.63\% missing rate.
(ii) Emergency declarations\footnote{\scriptsize{https://github.com/GoogleCloudPlatform/covid-19-open-data/blob/main/docs/table-emergency-declarations.md}} (\emph{\textbf{Emergency}}) dataset, which is aggregated by the Policy Surveillance Program at the Temple University Center for Public Health Law Research, contains emergency declarations and mitigation policies for each US state starting on January 20, 2020.
It includes 8,364 samples with 22 features, taking an about 62.69\% missing rate.
(iii) Government response\footnote{\scriptsize{https://github.com/GoogleCloudPlatform/covid-19-open-data/blob/main/docs/table-government-response.md}} (\emph{\textbf{Response}}) dataset contains a summary of a government's response to the events, including a stringency index, collected from the university of Oxford \cite{hale2021global}.
It includes 200,737 samples with 19 dimensions and a 5.66\% missing rate.
(iv) Symptom search trends\footnote{\scriptsize{https://github.com/GoogleCloudPlatform/covid-19-open-data/blob/main/docs/table-search-trends.md}}  (\emph{\textbf{Search}}) dataset for COVID-19 shows how Google search patterns for different symptoms change based on the relative frequency of searches for each symptom in 2,792  specific regions. It totally contains 948,762 samples with 424 symptoms and an 81.35\% missing rate.
(v) Daily weather\footnote{\scriptsize{https://github.com/GoogleCloudPlatform/covid-19-open-data/blob/main/docs/table-weather.md}}
 (\emph{\textbf{Weather}}) dataset shows the 9 daily weather attributes from the nearest station reported by National Oceanic and Atmospheric Administration in specific regions. It includes 4,911,011 samples from 19,284 regions with a 21.56\% missing rate.
(vi) COVID-19 case surveillance public use\footnote{\scriptsize{https://data.cdc.gov/Case-Surveillance/COVID-19-Case-Surveillance-Public-Use-Data/vbim-akqf}}
(\emph{\textbf{Surveil}}) dataset shows the 7 clinical and symptom features for 22,507,139 cases shared by the centers for disease control and prevention, taking a 47.62\% missing rate.
In particular, the licenses for \emph{Trial}, \emph{Emergency}, \emph{Response}, \emph{Search}, \emph{Weather}, and \emph{Surveil} datasets are ODbL, CC BY, CC BY, CC0, CC BY, and CC0, respectively.

\begin{table}[t]
\small
\centering
  \caption{Dataset statistics}
\vspace*{-0.08in}
  \label{tab:dataset}
    \setlength{\tabcolsep}{10pt}
  \begin{tabular}{|c|c|c|c|}
	\hline
    \textbf{Name}& \textbf{\#Samples}& \textbf{\#Features}& \textbf{Missing rate}\\\hline
    \emph{Trial} &6,433  &9  &9.63\% \\\hline
    \emph{Emergency} &8,364  &22  &62.69\% \\\hline
    \emph{Response} &200,737&19&5.66\%\\\hline
    \emph{Search} &948,762&424&81.35\%\\\hline
    \emph{Weather} &4,911,011&9&21.56\%\\\hline
    \emph{Surveil} &22,507,139&7&47.62\%\\\hline

\end{tabular}
\end{table}

\begin{table*}[t]\small
\centering
\setlength{\tabcolsep}{6pt}
\caption{Performance comparison of imputation methods over Trial, Emergency, and Response}
\vspace*{-0.08in}
\label{Tab:all_imputation1}
\begin{tabular}{cccccccccc}
\hline
\multirow{2}{*}{Method} &\multicolumn{3}{c}{\emph{Trial}}&\multicolumn{3}{c}{\emph{Emergency}}&\multicolumn{3}{c}{\emph{Response}}\\\cline{2-10}
 & RMSE (Bias)& Time (s) &$R_t$ (\%)& RMSE (Bias)& Time (s) &$R_t$ (\%)& RMSE (Bias)& Time (s) &$R_t$ (\%)\\ \hline
\multicolumn{1}{c}{MissF}&0.417 ($\pm$ 0.023)&152&100&$-$&$-$&$-$&$-$&$-$&$-$\\
\multicolumn{1}{c}{Baran}& 0.412($\pm$ 0.025)&1,204&100&$-$&$-$&$-$&$-$&$-$&$-$\\
\multicolumn{1}{c}{MICE}& 0.402($\pm$ 0.021)&592&100&$-$&$-$&$-$&$-$&$-$&$-$\\
\multicolumn{1}{c}{DataWig}&0.458 ($\pm$ 0.042)&329&100&0.389 ($\pm$ 0.031)&3,252&100&$-$&$-$&$-$\\
\multicolumn{1}{c}{RRSI}&0.398 ($\pm$ 0.011)&413&100&0.358 ($\pm$ 0.025)&4,673&100&$-$&$-$&$-$\\
\multicolumn{1}{c}{MIDAE}&0.445 ($\pm$ 0.025)&532&100&0.378 ($\pm$ 0.032)&2,321&100&$-$&$-$&$-$\\
\multicolumn{1}{c}{VAEI}&0.463 ($\pm$ 0.023)&321&100&0.398 ($\pm$ 0.036)&892&100&$-$&$-$&$-$\\
\multicolumn{1}{c}{MIWAE}&0.396 ($\pm$ 0.015)&892&100&$-$&$-$&$-$&$-$&$-$&$-$\\
\multicolumn{1}{c}{EDDI}&0.404 ($\pm$ 0.018)&132&100&0.363 ($\pm$ 0.019)&662&100&$-$&$-$&$-$\\
\multicolumn{1}{c}{HIVAE}&0.396 ($\pm$ 0.020)&124&100&0.357 ($\pm$ 0.0.20)&450&100&0.398 ($\pm$ 0.018)&969&100\\\hline
GINN&0.432 ($\pm$ 0.030)&327&100&0.373 ($\pm$ 0.028)&1,572&100&$-$&$-$&$-$\\
\textsf{SCIS}-GINN &0.430 ($\pm$ 0.028)&311&\textbf{16.72}&0.369 ($\pm$ 0.025)&962&13.92&0.412 ($\pm$ 0.023)&7,263&2.53\\\hline
GAIN &0.398 ($\pm$ 0.024)&90&100&0.352 ($\pm$ 0.025)&340&100&0.396 ($\pm$ 0.031)&649&100\\
\textsf{SCIS}-GAIN &\textbf{0.386 ($\pm$ 0.017)}&\textbf{85}&23.56&\textbf{0.342 ($\pm$ 0.019)}&\textbf{286}&\textbf{12.32}&\textbf{0.389 ($\pm$ 0.021)}&\textbf{595}&\textbf{1.50}\\\hline
\end{tabular}
\vspace*{0.05in}
\end{table*}

\begin{table*}[t]\small
\centering
\setlength{\tabcolsep}{6pt}
\caption{Performance comparison of imputation methods over Search, Weather, and Surveil}
\vspace*{-0.08in}
\label{Tab:all_imputation2}
\begin{tabular}{ccccccccccccccccccc}
\hline
\multirow{2}{*}{Method} &\multicolumn{3}{c}{\emph{Search}}&\multicolumn{3}{c}{\emph{Weather}} &\multicolumn{3}{c}{\emph{Surveil}}\\\cline{2-10}
 & RMSE (Bias)& Time (s) &$R_t$ (\%)& RMSE (Bias)& Time (s) &$R_t$ (\%)& RMSE (Bias)& Time (s) &$R_t$ (\%)\\ \hline
\multicolumn{1}{c}{HIVAE}&$-$&$-$&$-$&0.174 ($\pm$ 0.014)&14,612&100&0.440 ($\pm$ 0.008)&35,041&100\\\hline
GINN&$-$&$-$&$-$&$-$&$-$&$-$&$-$&$-$&$-$\\
\textsf{SCIS}-GINN &$-$&$-$&$-$&0.243 ($\pm$ 0.031)&78,621 & 1.92 &$-$&$-$ &$-$\\\hline
GAIN & 0.252 ($\pm$ 0.014)&78,121&100 &0.165 ($\pm$ 0.014)&9,252&100 &0.440 ($\pm$ 0.012)&29,102&100\\
\textsf{SCIS}-GAIN &\textbf{0.250 ($\pm$ 0.013)}& \textbf{3,534}&\textbf{0.78}&\textbf{0.163 ($\pm$ 0.016)}&\textbf{2,275}&\textbf{1.90}&\textbf{0.438 ($\pm$ 0.013)}&\textbf{3,019}&\textbf{0.67}\\\hline
\end{tabular}
\end{table*}

\begin{figure*}[t]
\centering
\raisebox{-1cm}{\includegraphics[width=0.88\linewidth]{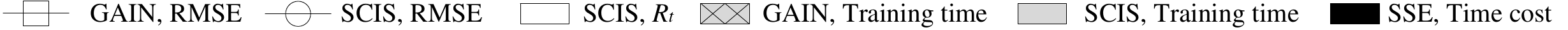}}
\vspace*{-0.02in}\\
\hspace*{-0.1in}
\subfigure[\emph{Trial}]{
\raisebox{-0.2cm}{\includegraphics[width=0.32\linewidth]{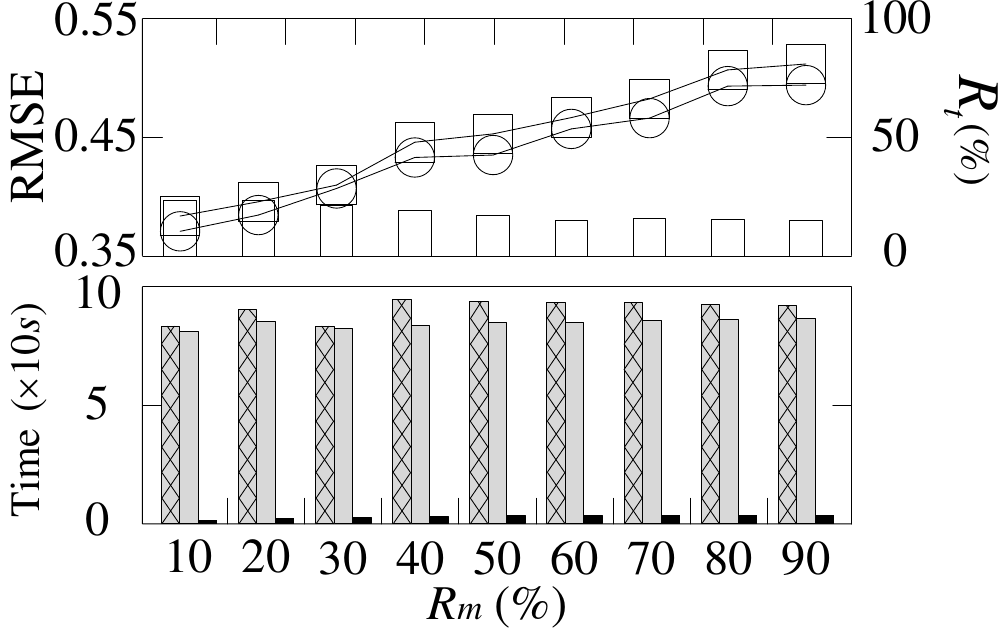}}}
\subfigure[\emph{Emergency}]{
\raisebox{-0.2cm}{\includegraphics[width=0.32\linewidth]{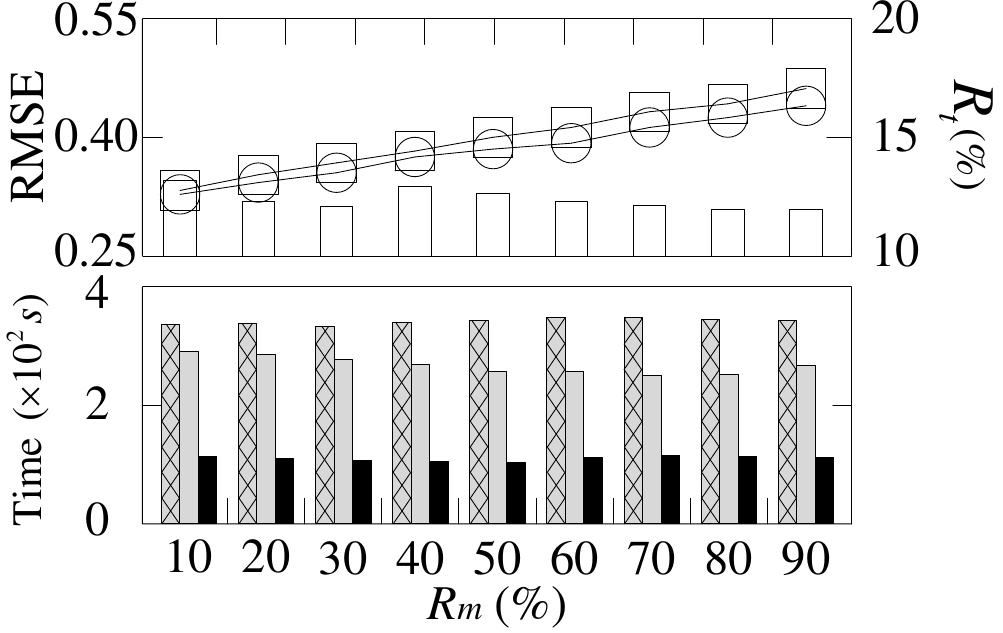}}}
\subfigure[\emph{Response}]{
\raisebox{-0.2cm}{\includegraphics[width=0.32\linewidth]{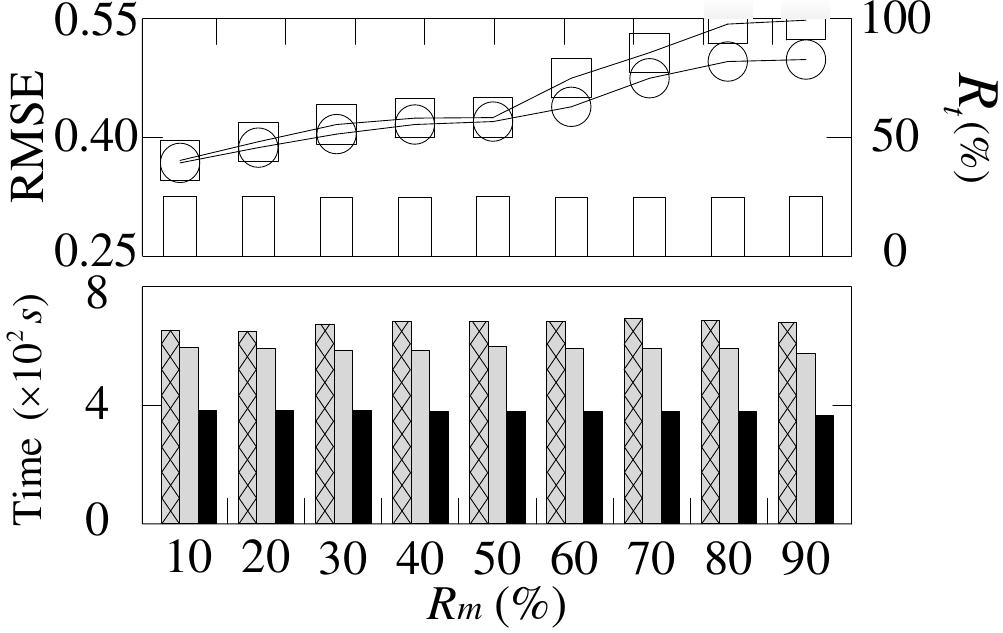}}}
\vspace*{-0.15in}
\\\hspace*{-0.1in}
\subfigure[\emph{Search}]{
\raisebox{-0.2cm}{\includegraphics[width=0.315\linewidth]{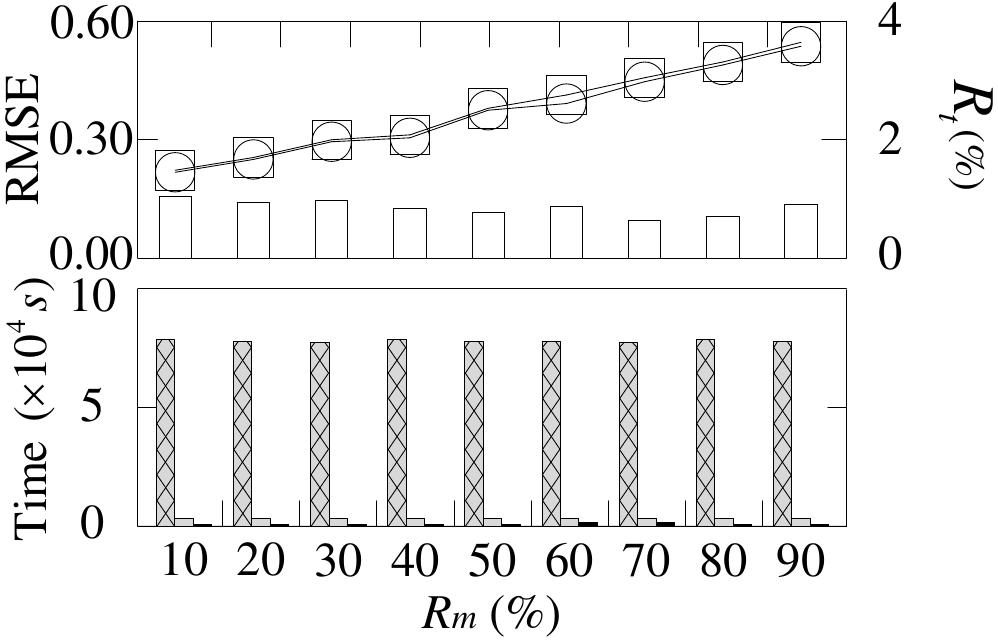}}}
\subfigure[\emph{Weather}]{
\raisebox{-0.2cm}{\includegraphics[width=0.32\linewidth]{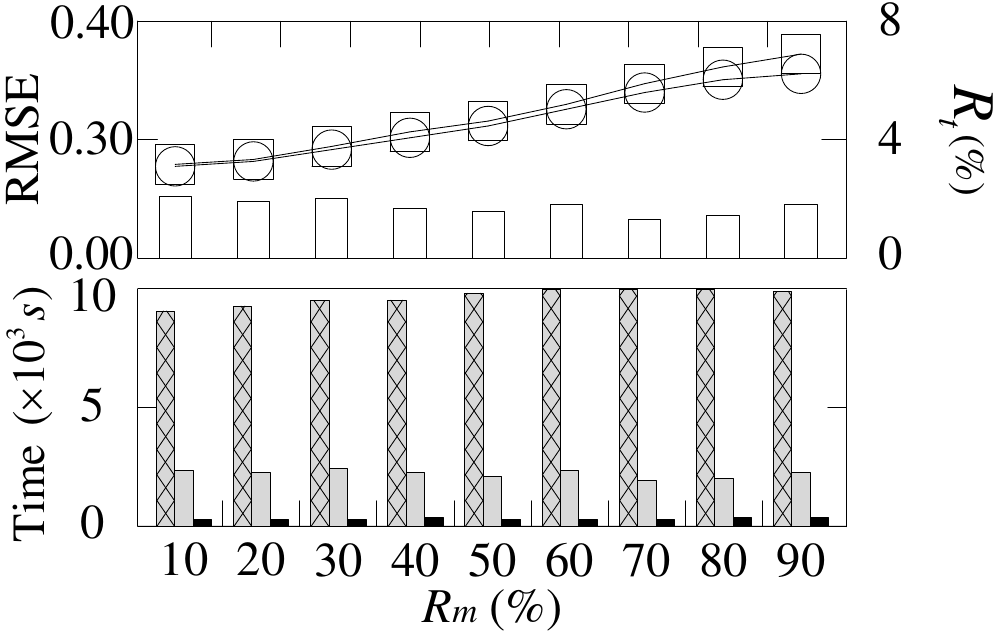}}}
\subfigure[\emph{Surveil}]{
\raisebox{-0.2cm}{\includegraphics[width=0.32\linewidth]{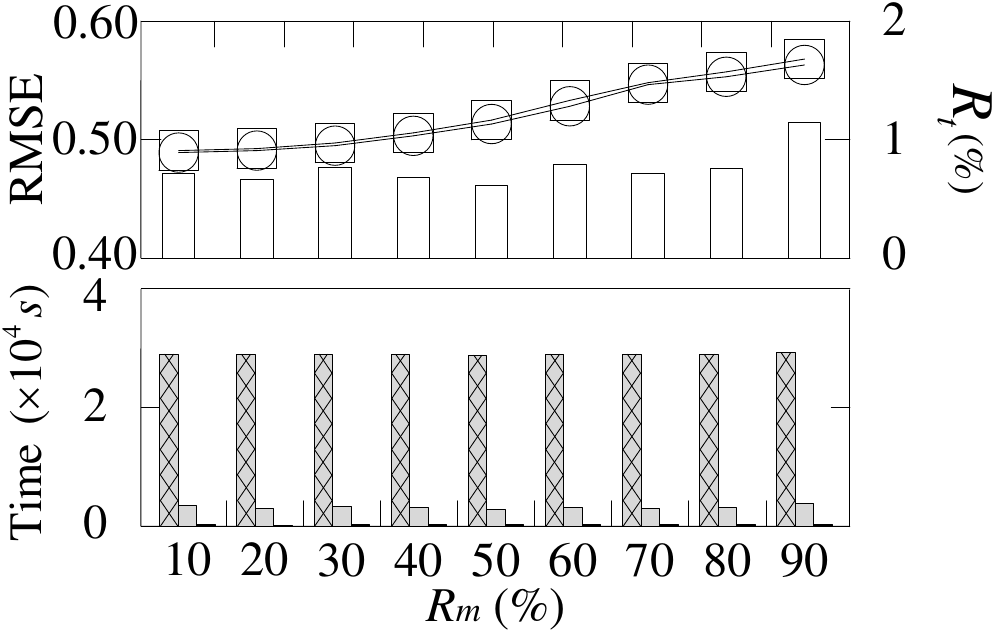}}}\\
\vspace*{-0.1in}
\caption{The performance of \textsf{SCIS} vs. $R_m$}
\vspace*{-0.25in}
\label{fig:MR}
\end{figure*}

\begin{figure*}[t]
\centering
\raisebox{-1cm}{\includegraphics[width=0.75\linewidth]{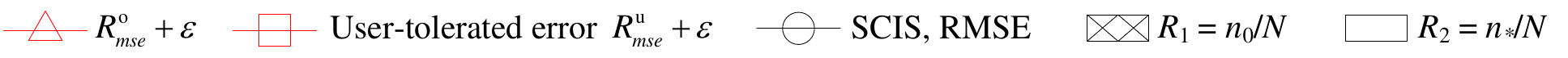}}
\vspace*{-0.02in}
\\ \vspace*{-0.03in}
\hspace*{-0.4in}
\subfigure[\emph{Trial}]{
\raisebox{-0.2cm}{\includegraphics[width=0.275\linewidth]{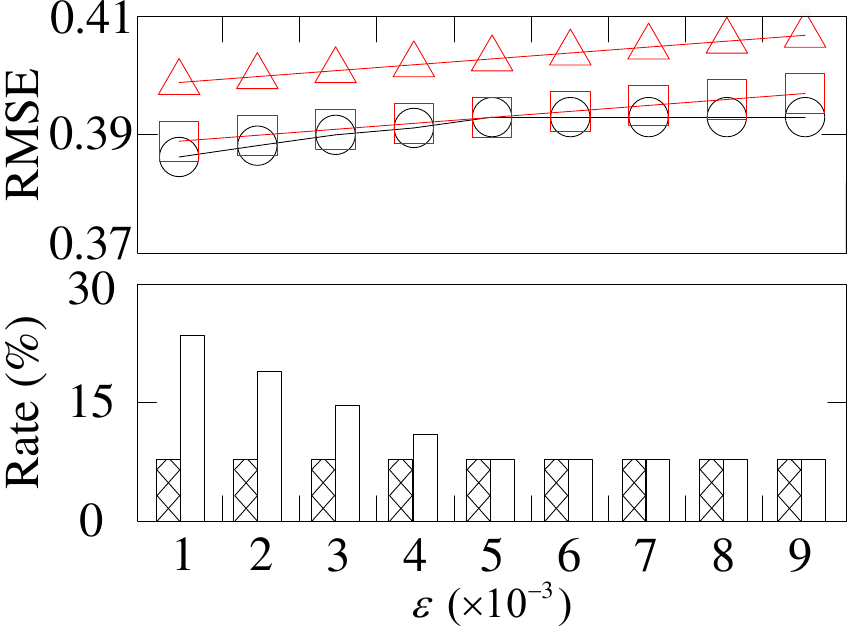}}}\hspace*{0.33in}
\subfigure[\emph{Emergency}]{
\raisebox{-0.2cm}{\includegraphics[width=0.275\linewidth]{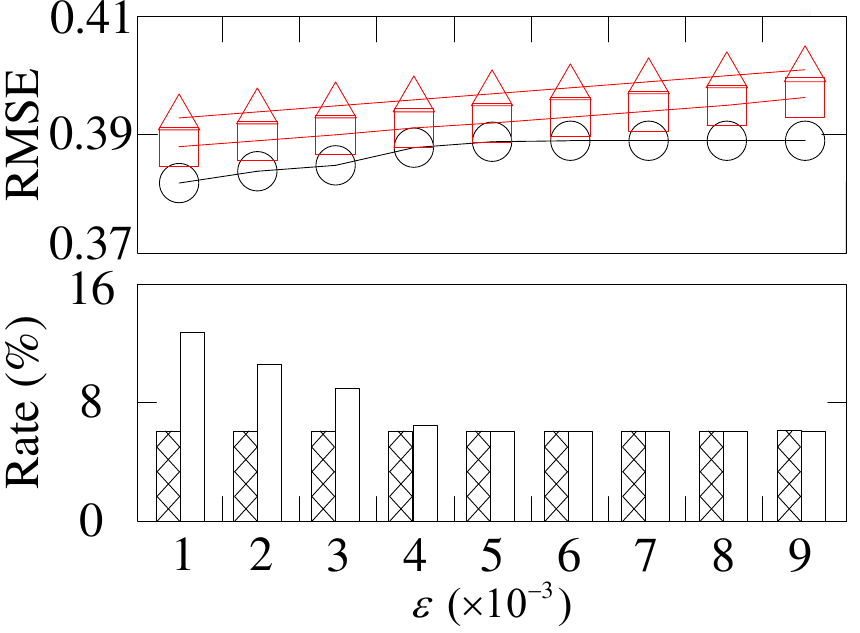}}}\hspace*{0.33in}
\subfigure[\emph{Response}]{
\raisebox{-0.2cm}{\includegraphics[width=0.275\linewidth]{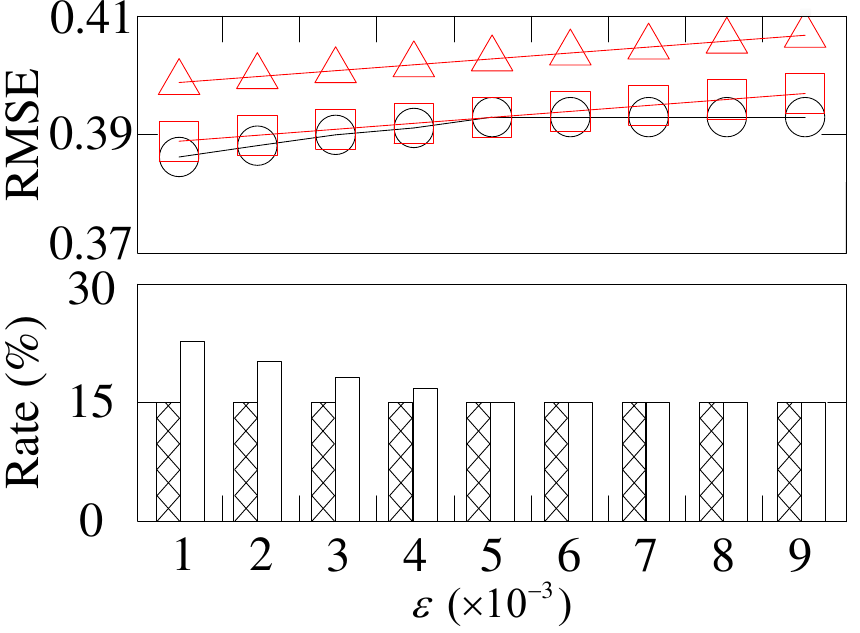}}}
\vspace*{-0.1in}
\\\hspace*{-0.45in}
\subfigure[\emph{Search}]{
\raisebox{-0.2cm}{\includegraphics[width=0.275\linewidth]{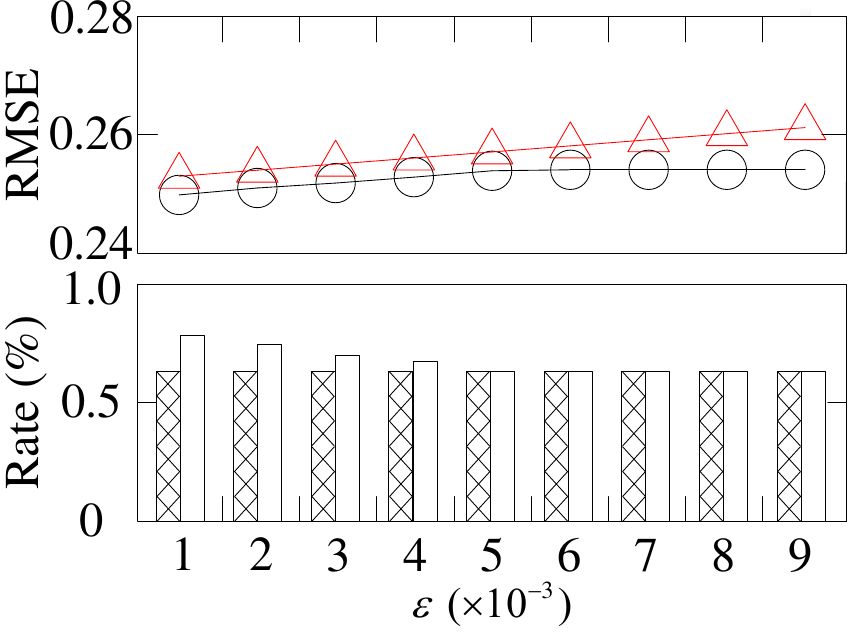}}}\hspace*{0.33in}
\subfigure[\emph{Weather}]{
\raisebox{-0.2cm}{\includegraphics[width=0.275\linewidth]{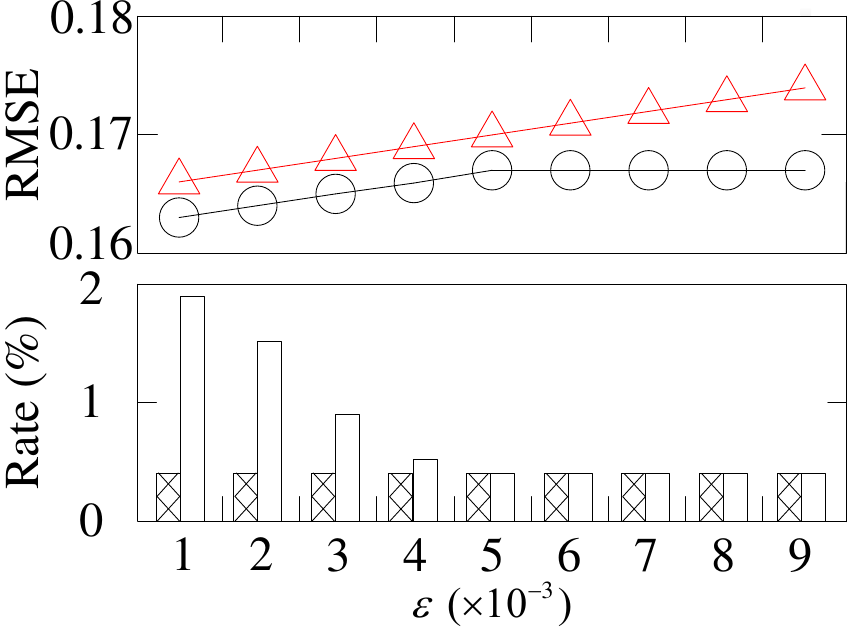}}}\hspace*{0.33in}
\subfigure[\emph{Surveil}]{
\raisebox{-0.2cm}{\includegraphics[width=0.275\linewidth]{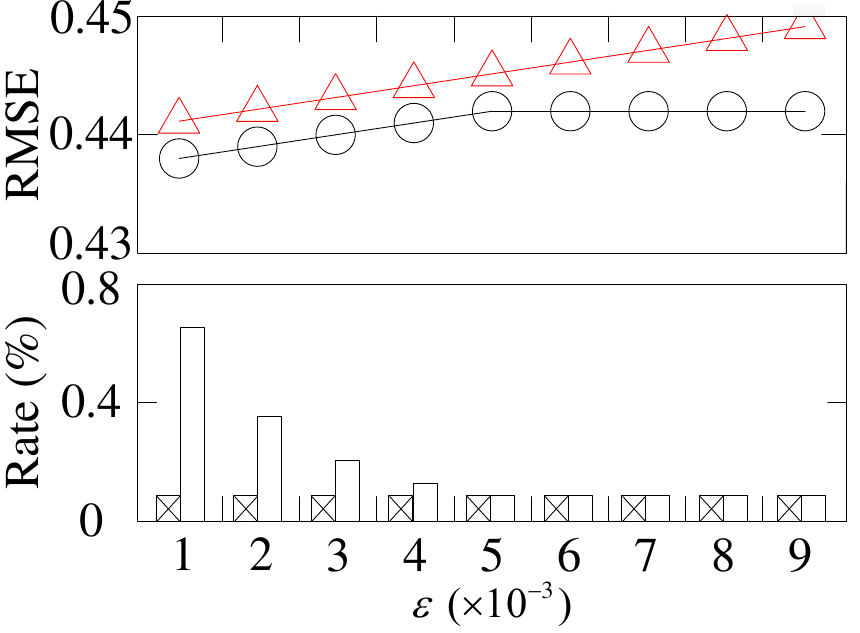}}}\\
\vspace*{-0.1in}
\caption{The performance of \textsf{SCIS} vs. $\varepsilon$}
\label{fig:Varepsilon}
\vspace*{-0.1in}
\end{figure*}

\textbf{Metrics.}
In the evaluation, we use the \emph{training time} and \emph{root mean squared error} (RMSE)  to measure the efficiency and effectiveness of imputation models.
We also report the \emph{training sample rate} $R_t$, i.e., how many samples are used for training models (100\% for basic original ones and $\frac{n_\star}{N}\times 100\%$ for \textsf{SCIS}).
The smaller the metric value, the better the imputation performance.
To obtain the RMSE values, we randomly remove 20\% observed values during training for imputation, and thus we use these observed values as the ground-truth to the missing values. In evaluation, each value is reported by averaging five times of experimental results under different data random divisions.

\textbf{Imputation methods.}
In the experiments, the baselines include eleven state-of-the-art imputation methods, namely three machine learning ones: MissF, Baran, and MICE, two MLP-based ones: DataWig and RRSI, four AE-based ones: MIDAE, VAEI, EDDI, and HIVAE, and two GAN-based ones: GINN and GAIN.
We evaluate \textsf{SCIS} on top of the GAN-based imputation methods.

\textbf{Implementation details.}
For all imputation methods, we thank the authors of each algorithm for providing the source codes, so that we directly utilize these source codes with default parameter settings in our experiment evaluation.
Specifically, for all machine learning imputation methods, the learning rate is 0.3, and the number of iterations is 100.
The number of decision trees in MissFI is 100.
Baran employs AdaBoost as the prediction model.
The imputation times in MICE are 20.
For all deep learning imputation methods, the learning rate is 0.001, the dropout rate is 0.5, the training epoch is 100, and the batch size is 128.
The ADAM algorithm is utilized to train networks.
MIDAE is a 2-layer with 128 units per layer network.
For VAEI, the encoder and decoder are fully connected networks with two hidden layers, each with 20 neurons per layer, and the latent space is 10-dimensional.
HIVAE uses only one dense layer for all the parameters of the encoder and decoder, each with 10 neurons per layer.
In GINN, the discriminator is a simple 3-layer feed-forward network trained 5 times for each optimization step of the generator.
In GAIN, both generator and discriminator are modeled as 2-layer fully connected network.
Moreover, in \textsf{SCIS}, the hyper-parameter $\lambda$ is 130, the confidence level $\alpha$ is 0.05, the hyper-parameter $\beta$ is 0.01, the number of parameter sampling $k$ in SSE is 20, the user-tolerated error bound $\varepsilon$ is 0.001.
The initial sample size $n_0$ is 500 for \emph{Trial}, 500 for \emph{Emergency}, 2,000 for \emph{Response}, 6,000 for \emph{Search}, 20,000 for \emph{Weather}, and 20,000 for \emph{Surveil}.
The validation sample size $N_v$ is equal to $n_0$.

\subsection{Scalability Evaluation}

Table~\ref{Tab:all_imputation1} and Table~\ref{Tab:all_imputation2} report the performance of imputation methods over six real-world incomplete datasets.
Some results are unavailable (represented by ``$-$''), since the corresponding methods are not able to finish within $10^5$ seconds.

One can observe that, \textsf{SCIS} takes less training time and smaller training sample rate $R_t$ than baselines,
while it achieves competitive imputation accuracy (i.e., similar RMSE value).
Moreover, \textsf{SCIS}-GAIN outperforms the other methods in most cases.
Specifically, \textsf{SCIS} adopts only 7.58\% training samples and saves 41.75\% training time of GAN-based methods in average.
For the last three \emph{million-size} incomplete datasets in Table~\ref{Tab:all_imputation2}, \textsf{SCIS} takes only 1.53\% training samples and saves 86.85\% training time of the GAN-based methods in average.
In particular, compared with the Sinkhorn divergence based method RRSI, \textsf{SCIS}-GAIN takes only 17.94\% training samples and saves 86.65\% training time in average, while averagely increases 3.74\% accuracy.
We omit MissF, Baran, MICE, RRSI, MIDAE, VAEI, MIWAE, and EDDI on the larger million-size datasets, due to the high model complexity.

Moreover, the speedup of \textsf{SCIS} is not very obvious on the first three dataset in Table~\ref{Tab:all_imputation1}, compared with the other ones in Table~\ref{Tab:all_imputation2}.
It is because that, the initial sample size $n_0$ for the first three datasets are significantly smaller than that of the other ones.
The smaller $n_0$, the larger the variance derived by Eq. \ref{eq:conditional}, resulting in higher training sample rate and training time.
The training time (resp. training sample rate) even decreases to 4.52\% (resp. 0.67\%) on \emph{Search} (resp. \emph{Surveil}) for GAIN.
It is attributed to the SSE module in \textsf{SCIS} that minimizes the required training sample size of GAN-based methods.
In addition, the competitive (even better) accuracy with (than) original methods results from the MS divergence imputation loss function employed in \textsf{SCIS} that measures the closeness between the true underlying data and generated data distributions.
Also, the accuracy guarantee in SSE benefits the accuracy of \textsf{SCIS}.
It even increases 3.02\% accuracy for GAIN on \emph{Trial}.
In particular, the experimental results of \textsf{SCIS}-GINN are unavailable over \emph{Search} and \emph{Surveil} datasets, since GINN has a high complexity on construction of the similarity graph.
For further extensive experimental study on \textsf{SCIS}, we employ GAIN as the baseline, since GAIN can work on these datasets, providing a clear comparison benchmark.

\begin{figure*}[t]
\centering
\includegraphics[width=0.45\linewidth]{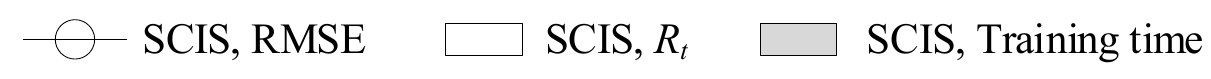}\vspace*{-0.03in}
\\
\hspace*{-0.1in}
\subfigure[\emph{Trial}]{
\raisebox{-2cm}{\includegraphics[width=0.315\linewidth]{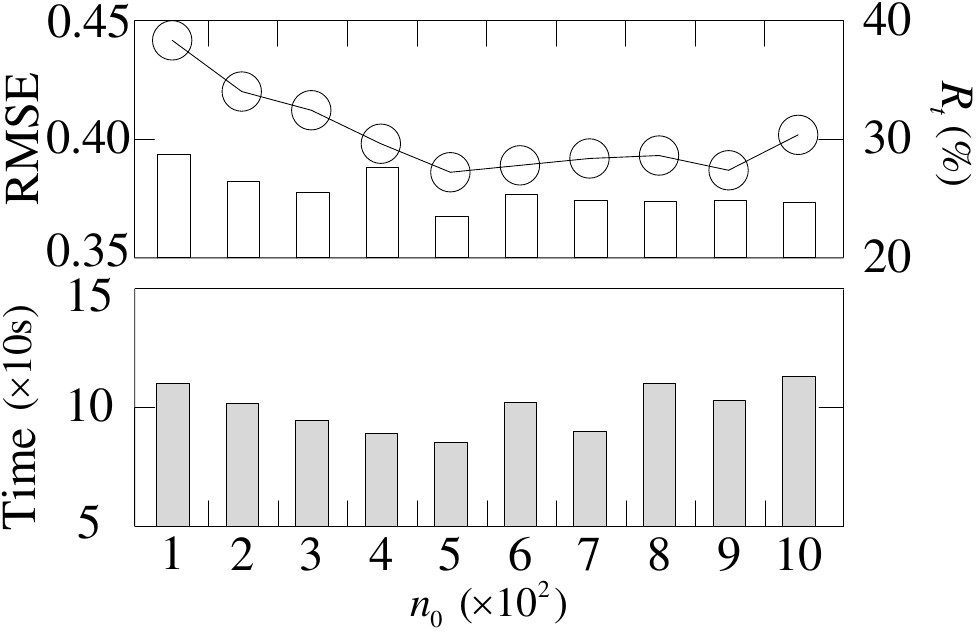}}}\hspace*{0.05in}
\subfigure[\emph{Emergency}]{
\raisebox{-2cm}{\includegraphics[width=0.315\linewidth]{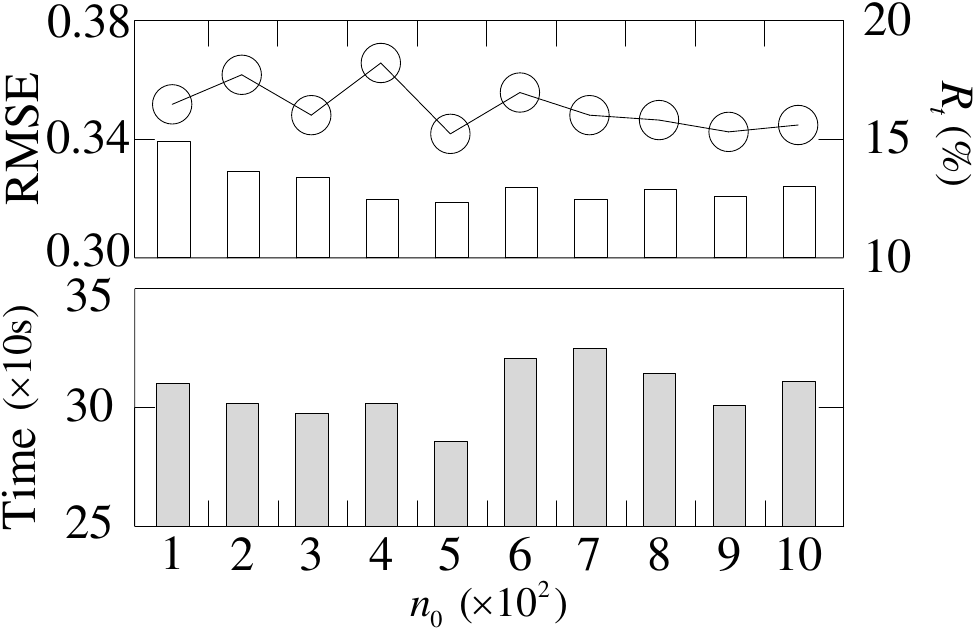}}}\hspace*{0.05in}
\subfigure[\emph{Response}]{
\raisebox{-2cm}{\includegraphics[width=0.315\linewidth]{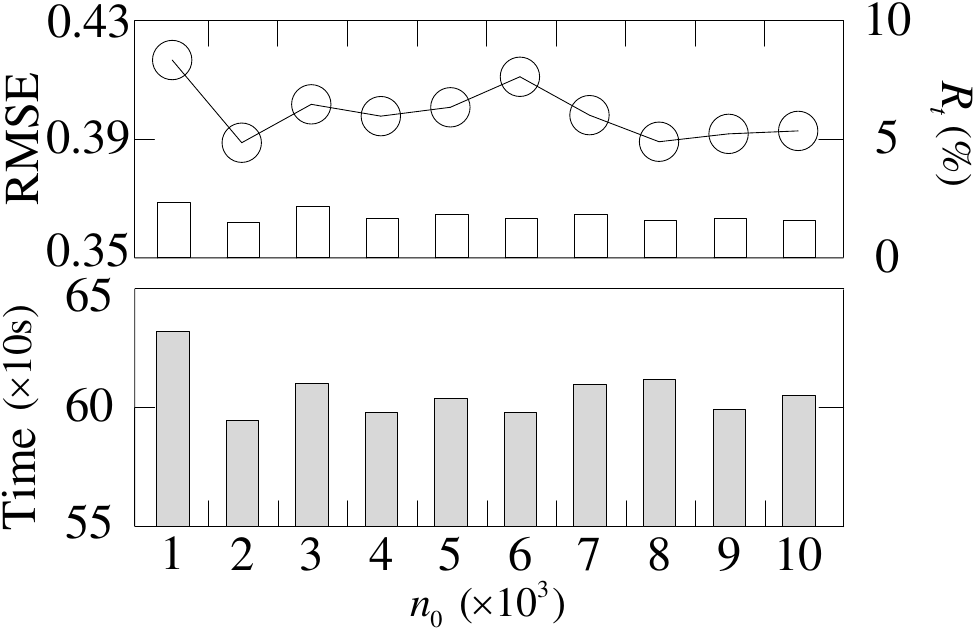}}}
\vspace*{-0.1in}
\\\hspace*{-0.14in}
\subfigure[\emph{Search}]{
\raisebox{-2cm}{\includegraphics[width=0.315\linewidth]{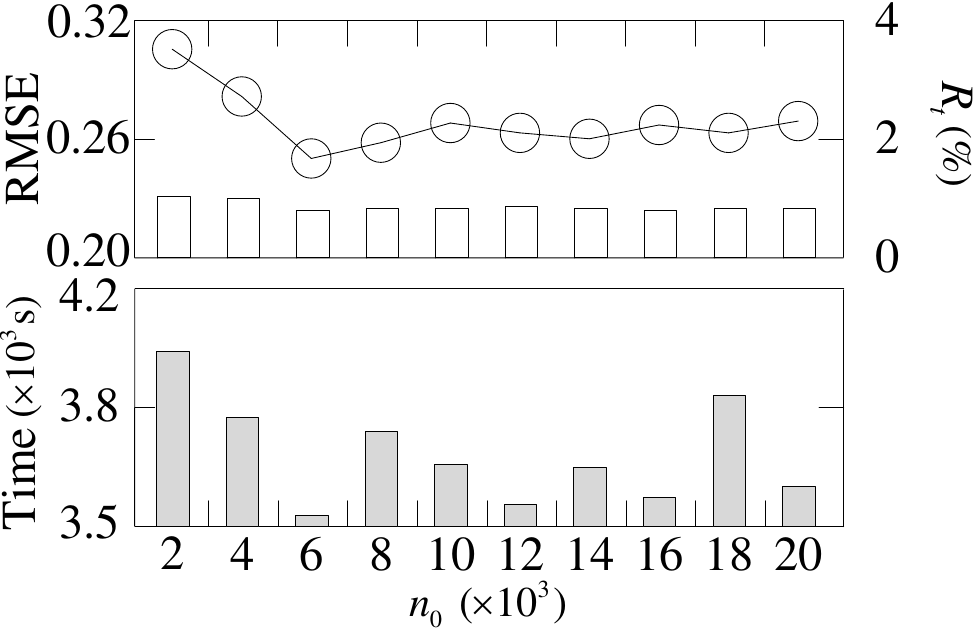}}}\hspace*{0.05in}
\subfigure[\emph{Weather}]{
\raisebox{-2cm}{\includegraphics[width=0.315\linewidth]{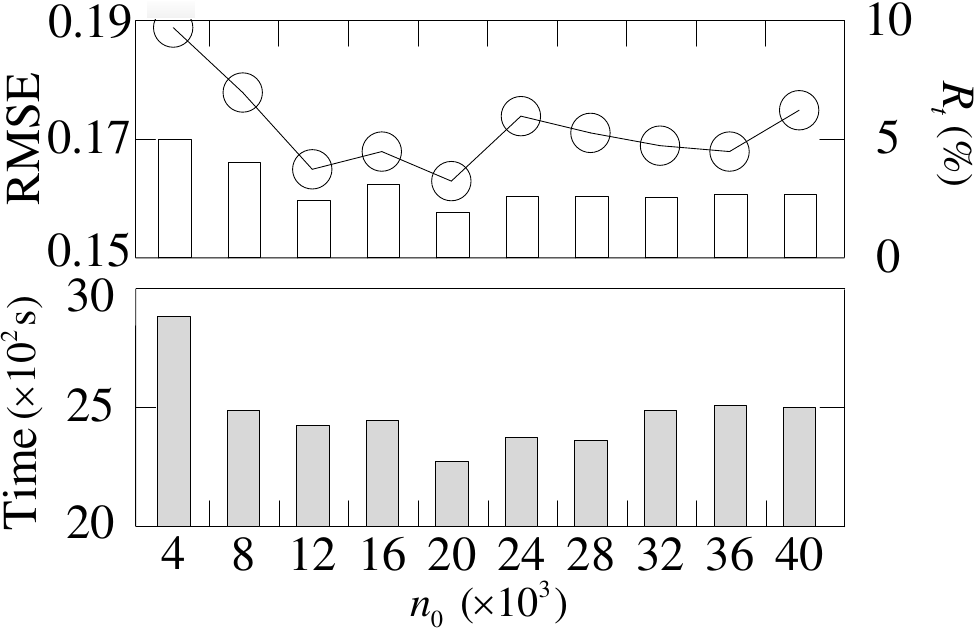}}}\hspace*{0.05in}
\subfigure[\emph{Surveil}]{
\raisebox{-2cm}{\includegraphics[width=0.315\linewidth]{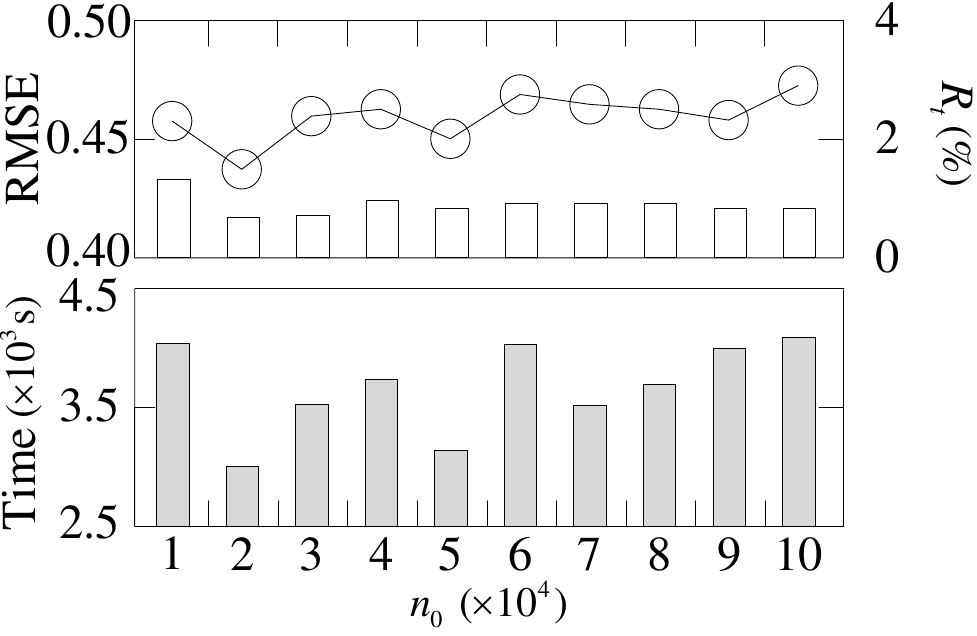}}}\\
\vspace*{-0.1in}
\caption{The performance of \textsf{SCIS} vs. $n_0$}
\label{fig:initial_sizes}
\end{figure*}

\subsection{Parameter Evaluation}

\textbf{Effect of $R_m$.}
When varying the missing rate $R_m$ (i.e., how many values in original observed data are dropped) from 10\% to 90\%, the corresponding results are depicted in Figure~\ref{fig:MR}. It also reports the time cost of the SSE module, which is the core module of \textsf{SCIS}.
The reported \textsf{SCIS} training time has included the execution time of SSE.
We can find that, compared with GAIN, \textsf{SCIS} takes much less training time and training samples to obtain a similar or even higher imputation accuracy in all cases.
It is more robust with the increasing missing rate $R_m$ than GAIN.
The SSE module takes 28.31\% training time of \textsf{SCIS} in average.
In addition, the imputation accuracy of both GAIN and \textsf{SCIS} is comparable, and it descends consistently with the growth of missing rate.
The reason is that, as the missing rate increases, the observed information for algorithms becomes less, making imputation algorithms less effective.

\textbf{Effect of $\varepsilon$.}
To verify the effectiveness of the imputation accuracy guarantee, we study the effect of the user-tolerated error bound $\varepsilon$ on the performance of \textsf{SCIS} with GAIN.
With $\varepsilon$ varying from 0.001 to 0.009, Figure \ref{fig:Varepsilon} plots the corresponding results.
The sample rate of the initial training set $\mathbf{X}_0$, i.e., $R_1 = n_0/N$ (and that of the minimum sample set $\mathbf{X}_\star$, i.e., $R_2 = n_\star/N$).
The user-tolerated error is derived by $R^u_{mse}$ + $\varepsilon$, where $R^u_{mse}$ is the RMSE value of GAIN with the MS divergence imputation loss function trained on $\mathbf{X}$.
In order to verify the effect of the DIM module on \textsf{SCIS}, we also report the imputation error $R^o_{mse}$ + $\varepsilon$ in the figure, where $R^o_{mse}$ is the RMSE value of original GAIN model trained on $\mathbf{X}$.
Some results are unavailable in the figure, because the corresponding methods cannot finish within $10^5$ seconds.

As inferred from the figure, \textsf{SCIS} has the higher imputation accuracy than the user-tolerated error $R^u_{mse}$ + $\varepsilon$ and the imputation error $R^o_{mse}$ + $\varepsilon$ in most cases.
It means that, the SSE module can estimate an appropriate sample size to get as the good imputation accuracy as the users' expectation, i.e., it indeed satisfies the accuracy requirement of users.
Besides, the error $R^u_{mse}$ derived by \textsf{SCIS} is smaller than $R^o_{mse}$ derived by GAIN in many cases.
It confirms that, the DIM module using the MS divergence does boost the imputation accuracy.
Moreover, in most cases, the RMSE of \textsf{SCIS} increases with the growth of $\varepsilon$, while $R_2$ is opposite.
It is because, a smaller value of $\varepsilon$ signifies a lower user-tolerated error.
Hence, more samples (i.e., a larger $R_2$) are needed to satisfy a lower RMSE requirement.
In addition, when $\varepsilon$ exceeds 0.005, the RMSE of \textsf{SCIS} changes slightly since $n_\star$ equals the lower bound $n_0$.

\textbf{Effect of $n_0$.}
Figure~\ref{fig:initial_sizes} depicts the experimental results of varying the initial sample size $n_0$.
For \textsf{SCIS} with GAIN, different datasets require different optimal $n_0$. \textsf{SCIS} achieves the best imputation accuracy (in terms of RMSE) when the optimal $n_0$ is chosen, i.e., 500 for \emph{Trial}, 500 for \emph{Emergency}, 2,000 for \emph{Response}, 6,000 for \emph{Search}, 20,000 for \emph{Weather}, and 20,000 for \emph{Surveil}. Meanwhile, the time consumption and training sample rate remain reasonable and acceptable.
In addition, the $R_t$ of \textsf{SCIS} increases with the decrease of $n_0$ in most cases.
It is partially because that, the less the initial sample size $n_0$, the larger the variance derived by Eq. \ref{eq:conditional}, leading to more training samples.

\begin{table*}[t]\small
\centering
\setlength{\tabcolsep}{4.5pt}
\caption{The ablation study of \textsf{SCIS} over Trial, Emergency, and Response}
\vspace*{-0.08in}
\label{Tab:ablation1}
\begin{tabular}{ccccccccccccccccccc}
\hline
\multirow{2}{*}{Method} &\multicolumn{3}{c}{\emph{Trial}}&\multicolumn{3}{c}{\emph{Emergency}}&\multicolumn{3}{c}{\emph{Response}}\\\cline{2-10}
 & RMSE (Bias)& Time (s) &$R_t$ (\%)& RMSE (Bias)& Time (s) &$R_t$ (\%)& RMSE (Bias)& Time (s) &$R_t$ (\%)\\ \hline
GAIN &0.398 ($\pm$ 0.024)&90&100&0.352 ($\pm$ 0.025)&340&100&0.396 ($\pm$ 0.031)&649&100\\
DIM-GAIN &\textbf{0.383 ($\pm$ 0.022)}&292&100 &\textbf{0.340 ($\pm$ 0.025)}&617&100 &\textbf{0.386 ($\pm$ 0.028)}&7,771&100\\
Fixed-DIM-GAIN & 0.389 ($\pm$ 0.025)& \textbf{57}&\textbf{10} &0.348 ($\pm$ 0.028)&\textbf{125}&\textbf{10} &0.388 ($\pm$ 0.002)&836&10\\
\textsf{SCIS}-GAIN &0.386 ($\pm$ 0.017)&85&23.56&0.342 ($\pm$ 0.019)&286&12.32&0.389 ($\pm$ 0.021)&\textbf{595}&\textbf{1.50}\\\hline
\end{tabular}
\vspace*{0.05in}
\end{table*}

\begin{table*}[t]\small
\centering
\setlength{\tabcolsep}{4.5pt}
\caption{The ablation study  of \textsf{SCIS} over Search, Weather, and Surveil}
\vspace*{-0.08in}
\label{Tab:ablation2}
\begin{tabular}{ccccccccccccccccccc}
\hline
\multirow{2}{*}{Method} &\multicolumn{3}{c}{\emph{Search}}&\multicolumn{3}{c}{\emph{Weather}} &\multicolumn{3}{c}{\emph{Surveil}}\\\cline{2-10}
 & RMSE (Bias)& Time (s) &$R_t$ (\%)& RMSE (Bias)& Time (s) &$R_t$ (\%)& RMSE (Bias)& Time (s) &$R_t$ (\%)\\ \hline
GAIN & 0.252 ($\pm$ 0.014)&78,121&100 &0.165 ($\pm$ 0.014)&9,252&100 &0.440 ($\pm$ 0.012)&29,102&100\\
DIM-GAIN & $-$ &$-$&$-$ & $-$ &$-$&$-$ & $-$&$-$&$-$\\
Fixed-DIM-GAIN &\textbf{0.247 ($\pm$ 0.007)}&27,045 &10 &\textbf{0.158 ($\pm$ 0.012)}&18,408&10 &\textbf{0.434 ($\pm$ 0.015)}&78,508&10\\
\textsf{SCIS}-GAIN &0.250 ($\pm$ 0.013)& \textbf{3,534}&\textbf{0.78}&0.163 ($\pm$ 0.016)&\textbf{2,275}&\textbf{1.90}&0.438 ($\pm$ 0.013)&\textbf{3,019}&\textbf{0.67}\\\hline
\end{tabular}
\vspace*{0.05in}
\end{table*}

\begin{table}[t]\small
\centering
\setlength{\tabcolsep}{6pt}
\caption{Results of post-imputation prediction}
\vspace*{-0.08in}
\label{Tab:Post-imputation}
\begin{tabular}{|c|c|c|c|}
\hline
Metric&Dataset& GAIN& \textsf{SCIS}-GAIN\\ \hline
\multirow{2}{*}{AUC} &\emph{Trial} &0.903 ($\pm$ 0.040)&\textbf{0.905 ($\pm$ 0.040)}\\\cline{2-4}
&\emph{Surveil} &0.949 ($\pm$ 0.020)&\textbf{0.952 ($\pm$ 0.025)}\\\hline
\multirow{4}{*}{MAE}
&\emph{Emergency} & 122.312 ($\pm$ 5.322)&\textbf{121.683 ($\pm$ 4.985)}\\\cline{2-4}
&\emph{Response} & 106.216 ($\pm$ 7.689)& \textbf{105.343 ($\pm$ 6.321)}\\\cline{2-4}
&\emph{Search} &89.142 ($\pm$ 9.543)& \textbf{88.879 ($\pm$ 9.901)}\\\cline{2-4}
&\emph{Weather} &100.262 ($\pm$ 7.837)&\textbf{99.872 ($\pm$ 9.222)}\\\hline
\end{tabular}
\end{table}

\subsection{Ablation Study}
We investigate the influence of different modules of \textsf{SCIS} on the imputation performance.
The corresponding experimental results of the RMSE, training time, and training sample rate $R_t$ are shown in Table \ref{Tab:ablation1} and Table \ref{Tab:ablation2}.
DIM-GAIN is the variant of \textsf{SCIS}-GAIN without the SSE module over GAIN.
Fixed-DIM-GAIN, a variant of DIM-GAIN, randomly selects ten percentage of samples as the training data to accelerate the model training process.
In Table \ref{Tab:ablation2}, the results of DIM-GAIN are unavailable (represented by ``$-$''), since they are not able to finish within $10^5$ seconds.

We can observe that, DIM-GAIN gains better imputation accuracy (i.e., smaller RMSE value) than GAIN, while requires higher training time.
Besides, compared with DIM-GAIN and Fixed-DIM-GAIN, \textsf{SCIS}-GAIN takes significantly less training time and training samples, while shows a negligible decrease in imputation accuracy, especially on the last three incomplete datasets.
Specifically, DIM-GAIN increases 3.24\% accuracy for GAIN in average, but its training time is 4.68x averagely of GAIN.
It confirms the effectiveness of the MS divergence imputation loss function in the DIM module.
\textsf{SCIS}-GAIN takes 6.79\% (resp. 67.88\%) training samples and saves 72.29\% (resp. 20.27\%) training time of DIM-GAIN (resp. Fixed-DIM-GAIN) in average, while only averagely decreases 0.72\% (resp. 0.51\%)  accuracy for DIM-GAIN (resp. Fixed-DIM-GAIN).
In general, these results further confirm that the sample size estimation strategy in the SSE module is valid and indispensable.
Moreover, for Fixed-DIM-GAIN and \textsf{SCIS}-GAIN that use the same imputation model (i.e., DIM-GAIN),
the more training samples (i.e., higher training sample rate $R_t$) the imputation method requires, the higher imputation accuracy it achieves.
It is because that, the more the training samples, the higher the available information for imputation algorithms, making algorithms more powerful.

\subsection{Evaluation on Post-imputation Prediction}
\label{sec:appendix-results}

The ultimate goal of imputing missing data is to benefit the \emph{downstream data analytics}, e.g., regression and classification.
In the last set of experiments, we verify the superiority of \textsf{SCIS} over GAIN on the \emph{post-imputation prediction} task.

For the classification task over \emph{Trial} and \emph{Surveil} and the regression task over \emph{Emergency}, \emph{Response}, \emph{Search}, and \emph{Weather}, the corresponding prediction results are depicted in Table \ref{Tab:Post-imputation}.
The larger AUC value corresponds to the better prediction effect, while MAE is opposite.
The imputation methods are first employed to impute missing values in the incomplete datasets.
Then, a regression/classification model is trained with three fully-connected layers over the imputed data.
The training epoch is 30, the learning rate is 0.005, the dropout rate is 0.5, and the batch size is 128.

We can observe that, the prediction performance under different imputation algorithms is consistent with the imputation performance of these algorithms, i.e., \textsf{SCIS}-GAIN gains competitive (even better) accuracy with (than) GAIN.
Specifically, \textsf{SCIS}-GAIN decreases 0.51\% MAE for GAIN on the regression task, while increases 0.27\% AUC for GAIN on the classification task.
In addition, on the regression (resp. classification) task, \textsf{SCIS}-GAIN achieves the larger improvement over the \emph{Weather} (resp. \emph{Response}) dataset.
Thus, it further confirms the effectiveness of \textsf{SCIS}.

\section{Conclusion}
\label{sec:conclusion}

In this paper, we propose an effective scalable imputation system \textsf{SCIS} to accelerate GAN-based imputation models.
It consists of a DIM module and an SSE module.
DIM makes the generative adversarial imputation models \emph{differentiable} via leveraging a new MS divergence.
SSE estimates the \emph{minimum} training sample size for the differentiable imputation model, so that the final trained model satisfies a user-tolerated error bound.
Extensive experiments over several real-world datasets demonstrate that, \textsf{SCIS} significantly accelerates the model training and meanwhile harvests the competitive imputation accuracy with the state-of-the-art GAN-based methods.

Almost all existing GAN-based imputation algorithms are under the simple MCAR assumption.
It to some extent limits the effectiveness of \textsf{SCIS} under the real complex missing mechanism.
In the future, we intend to further explore the imputation problem under more complex missing mechanisms for large-scale incomplete data.

\balance

\section*{Acknowledgment}
This work is partly supported by the Zhejiang Provincial Natural Science Foundation for Distinguished Young Scholars under Grant No.LR21F020005,
the National Natural Science Foundation of China under Grants No.61902343, No.62025206, No.61972338, No.62102351, No.61825205,
the National Key Research and Development Program of China under Grant No.2019YFE0126200,
the National Science and Technology Major Project of China under Grant No.50-D36B02-9002-16/19,
and the Fundamental Research Funds for the Central Universities under Grant No.2021FZZX001-25.
Xiaoye Miao is the corresponding author of the work.

\bibliographystyle{IEEEtran}
\bibliography{SCIS}

\end{document}